\documentclass{article}

    \PassOptionsToPackage{numbers, compress}{natbib}

\usepackage[utf8]{inputenc} 
\usepackage[T1]{fontenc}    
\usepackage[colorlinks=true, citecolor=blue, linkcolor=black, urlcolor=blue]{hyperref}       
\usepackage{url}            
\usepackage{booktabs}       
\usepackage{amsfonts}       
\usepackage{nicefrac}       
\usepackage{microtype}      
\usepackage{xcolor}         
\usepackage{epsfig}
\usepackage{graphicx}
\usepackage{amsmath}
\usepackage{amssymb}
\usepackage{wrapfig}
\usepackage{enumitem}
\usepackage{algorithm}
\usepackage{algpseudocode}
\usepackage{float}

\usepackage{epsfig}
\usepackage{graphicx}
\usepackage{amsmath}
\usepackage{amsthm}
\usepackage{amssymb}
\usepackage{bm}
\usepackage{svg}
\usepackage{caption}
\usepackage{subcaption}
\usepackage{multirow}
\usepackage{tikz}
\usetikzlibrary{arrows.meta}
\usetikzlibrary{plotmarks}
\usetikzlibrary{decorations}
\usepackage{pgfplots}
\pgfplotsset{compat=1.18}

\usepackage{rotating}
\usepackage{tikz}
\usepackage{makecell}
\usepackage{standalone}
\usepackage{amsmath}
\usepackage{amssymb}
\usepackage{mathtools}
\usepackage{amsthm}
\usepackage{tcolorbox}
\usepackage{svg}
\usepackage{wrapfig}
\usepackage{booktabs}
\usepackage[scr=boondox]{mathalpha}
\usepackage{cleveref}
\usepgfplotslibrary{groupplots}

\theoremstyle{plain}

\newtheorem*{lemma*}{Lemma}

\theoremstyle{definition}

\theoremstyle{remark}

\definecolor{msugreen}{HTML}{18453B}
\definecolor{msuwhite}{HTML}{FFFFFF}

\definecolor{msubrightgreen}{HTML}{0DB14B}
\definecolor{msugrey}{HTML}{97A2A2}
\definecolor{msubrightorange}{HTML}{F08521}
\definecolor{msuteal}{HTML}{008183}
\definecolor{msublue}{HTML}{909AB7}
\definecolor{msublack}{HTML}{535054}
\definecolor{msuyellow}{HTML}{D1DE3F}
\definecolor{msubiege}{HTML}{E8D9B5}
\definecolor{msulightorange}{HTML}{C89A58}
\definecolor{msuolive}{HTML}{94AE4A}
\definecolor{msupurple}{HTML}{6E005F}
\definecolor{msured}{HTML}{CB5A28}

\newcommand{\methodName}[0]{\textcolor{violet!65!black}{Obliviator}} %

\newcommand{\dial}{\textsc{Dial}}
\newcommand{\dialsent}{\textsc{Dial-Sentiment}}
\newcommand{\dialment}{\textsc{Dial-Mention}}

\newcommand{\biasinbios}{\textsc{Bias in Bios}}
\newcommand{\glove}{GloVe}
\newcommand{\cov}{\mathbb{C}\text{ov}}

\definecolor{color1}{HTML}{1f77b4} %
\definecolor{color2}{HTML}{ff7f0e} %
\definecolor{color3}{HTML}{2ca02c} %
\definecolor{color4}{HTML}{d62728} %
\definecolor{color5}{HTML}{9467bd} %
\definecolor{color6}{HTML}{8c564b} %
\definecolor{color7}{HTML}{e377c2} %
\definecolor{color8}{HTML}{7f7f7f} %
\definecolor{color9}{HTML}{bcbd22} %
\definecolor{color10}{HTML}{17becf} %
\definecolor{color11}{HTML}{e41a1c} 
\definecolor{color12}{HTML}{377eb8} 
\definecolor{color13}{HTML}{4daf4a} 
\definecolor{color14}{HTML}{984ea3} 
\definecolor{color15}{HTML}{ffcc00} 
\definecolor{color16}{HTML}{a65628} 
\definecolor{color17}{HTML}{f781bf} 
\definecolor{color18}{HTML}{999999} 
\definecolor{color19}{HTML}{66c2a5} 
\definecolor{color20}{HTML}{fc8d62} 
\definecolor{color21}{RGB}{214, 39, 40}
\definecolor{skyblue}{RGB}{232, 243, 255}
\definecolor{color22}{HTML}{FFF9F0}
\definecolor{color23}{HTML}{F5F5FF}

\usepackage[final]{neurips_2025}

\title{\includegraphics[height=24pt]{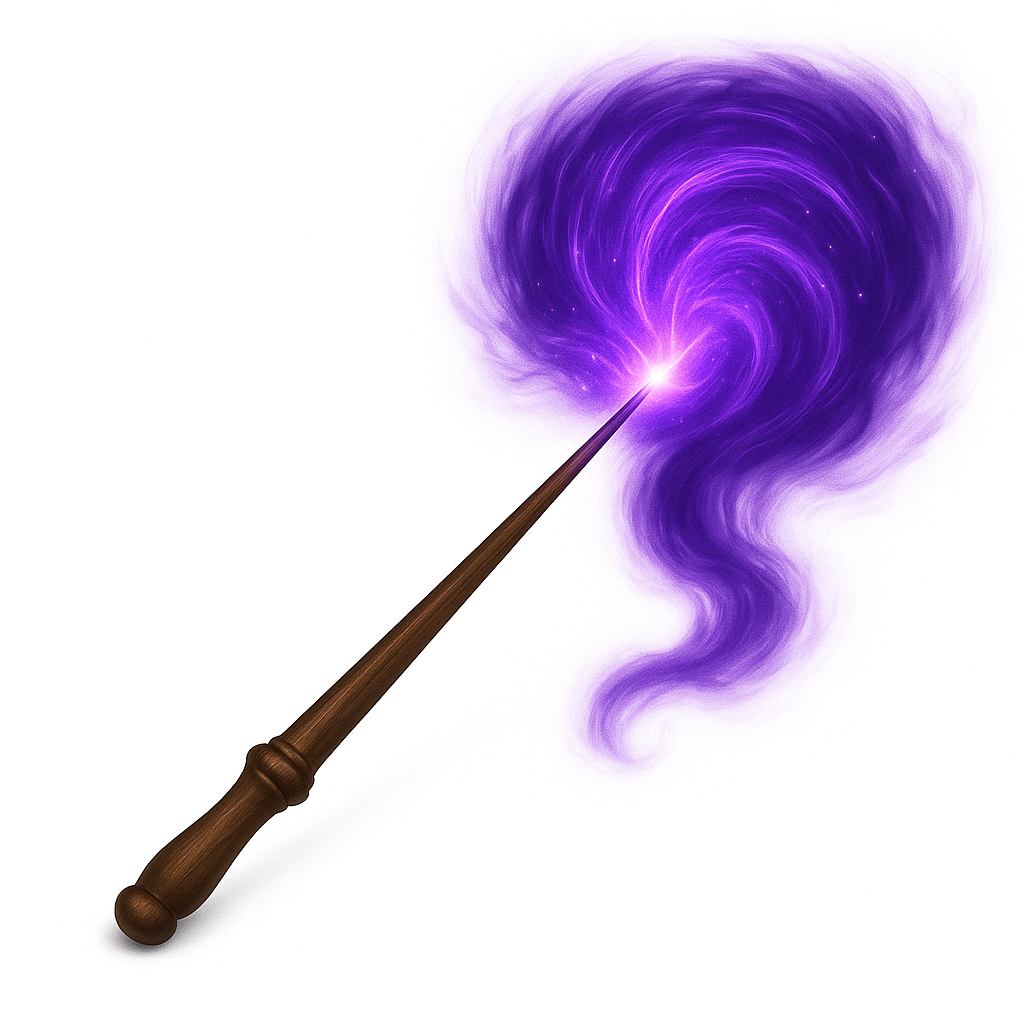} \hspace{-5pt}\methodName{} Reveals the Cost of Nonlinear Guardedness in Concept Erasure}

\author{
  Ramin Akbari$^{*}$
  \quad Milad Afshari$^{*}$
  \quad \textbf{Vishnu Naresh Boddeti}\\
  \\
  Michigan State University
  \\
  {\small \texttt{\{akbarigh, afsharim, vishnu\}@msu.edu}}
}

\begin{document}
\maketitle
\begin{abstract}
Concept erasure aims to remove unwanted attributes, such as social or demographic factors, from learned representations, while preserving their task-relevant utility. While the goal of concept erasure is protection against all adversaries, existing methods remain vulnerable to nonlinear ones. This vulnerability arises from their failure to fully capture the complex, nonlinear statistical dependencies between learned representations and unwanted attributes. Moreover, although the existence of a trade-off between utility and erasure is expected, its progression during the erasure process, i.e., the cost of erasure, remains unstudied. In this work, we introduce \emph{\methodName{}}, a post-hoc erasure method designed to fully capture nonlinear statistical dependencies. We formulate erasure from a functional perspective, leading to an optimization problem involving a composition of kernels that lacks a closed-form solution. Instead of solving this problem in a single shot, we adopt an iterative approach that gradually morphs the feature space to achieve a more utility-preserving erasure. Unlike prior methods, \emph{\methodName{}} guards unwanted attribute against nonlinear adversaries. Our gradual approach quantifies the cost of nonlinear guardedness and reveals the dynamics between attribute protection and utility-preservation over the course of erasure. The utility-erasure trade-off curves obtained by \emph{\methodName{}} outperform the baselines and demonstrate its strong generalizability: its erasure becomes more utility-preserving when applied to the better-disentangled representations learned by more capable models.\vspace{-5pt}
\end{abstract} 

\renewcommand{\thefootnote}{*} 
\footnotetext{Equal contribution.} 

\setcounter{footnote}{0}
\renewcommand{\thefootnote}{\arabic{footnote}}
\section{Introduction \label{sec:intro}}
Pretrained Language Models (PLMs) have become central in Natural Language Processing (NLP), achieving remarkable performance across various tasks. However, they often encode unwanted concepts, such as demographic or social attributes, leading to biases and potentially unfair predictions \cite{verhoeven2016twisty,bolukbasi2016man,ravfogel2020null}. Concept erasure aims to remove such undesirable concepts from the learned representations \cite{ravfogel2022linear}, either supervised, leveraging both task and sensitive labels or unsupervised, using only sensitive labels.
\par\noindent \textbf{Current concept erasure methods fall short of fully removing undesired attributes}. The task involves balancing two often competing objectives: (1) removing unwanted concepts, and (2) preserving as much information as possible. Early methods relied on linear projection techniques \cite{bolukbasi2016man,gonen2019lipstick, ravfogel2020null, ravfogel2022linear} applied in a post-hoc manner, i.e., modify learned representations without needing direct access to the PLMs, offering simplicity and efficiency. Later methods shifted to addressing concept erasure in a nonlinear manner \cite{ravfogel2022kernelized, shao2022gold,chowdhury2021adversarial,chowdhury2022learning,basu2023robust}; however, they remain limited in capturing nonlinear dependencies and are vulnerable to nonlinear adversaries. Even with costly fine-tuning of PLMs, as we will show in this work (\S \ref{sec:exp}), erasure is not complete. Moreover, the dynamics of the competition between objectives, i.e., the progressive loss and gain in each objective throughout the erasure process, and how they shape the resultant utility-erasure trade-off, have not been explored.
\begin{figure}[t]
    \centering
    \vspace{-10pt}
    \includegraphics[width=0.8\columnwidth]{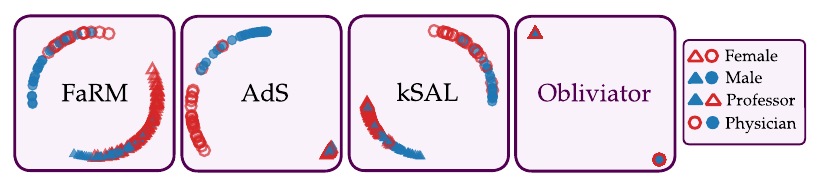}
    \caption{\textbf{Erasure of Gender from Representation on \biasinbios}. Embeddings from a nonlinear adversary trained to extract gender information from the erased representation. Existing nonlinear methods fail to fully protect gender, as gender-specific distributions within each profession remain distinguishable. In contrast, \methodName{} effectively guards gender by overlapping representations across gender, while preserving separability by profession.\vspace{-5pt}}
    \label{fig:teaser}
\end{figure}

\par\noindent \textbf{This paper studies the cost of nonlinearly guarded erasure.} We propose \methodName{}, which performs erasure by minimizing the statistical dependency between learned PLM representations and unwanted attributes. Our formulated erasure problem solves a multi-objective optimization with competing objectives. Solving this optimization in a single shot is challenging and likely leads to a poor solution. Instead, \methodName{} solves this problem using a two-step iterative procedure, enabling us to gradually morph the feature space to obtain a more utility-preserving erasure. We utilize the Reproducing Kernel Hilbert Space (RKHS), which allows us to capture nonlinear statistical dependencies in a closed form, facilitating the optimization. Our stable optimization approach provides a reliable way to study the dynamics of the objectives during the erasure process, known as the utility-erasure trade-off. Through these trade-off curves, across various experiments, we demonstrate \methodName{} outperforms baselines while guarding against nonlinear adversaries (see \Cref{fig:teaser}). Moreover, its erasure becomes more utility-preserving when applied to the better-disentangled representations learned by more capable PLMs.
\begin{tcolorbox}[left=2pt,right=2pt,top=2pt,bottom=2pt,colback=violet!5!white,colframe=violet!65!black, 
]{
\textbf{Summary of Contributions}:
\begin{enumerate}[left=1pt]
    \item We introduce \methodName{}, a post-hoc erasure method that captures the nonlinear dependency of sensitive attributes to original representation. \methodName{} guards unwanted attributes against nonlinear adversaries.(\S \ref{sec:obliviator})
    
    \item Our stable optimization approach provides a reliable way to study the dynamics of the utility-erasure trade-off. Across all experimental setups, \methodName{} consistently achieves higher task performance at every level of erasure, setting a strong benchmark for this trade-off. (\S\ref{sec:results:bert_model})

    \item We illustrate the generalizability of \methodName{} through experiments across various language models. We observe that its erasure becomes more utility-preserving when applied to the better-disentangled representations learned by more capable PLMs.(\S\ref{sec:results:other_languge_models})

\end{enumerate}
}
\end{tcolorbox}\vspace{-8pt}

\section{Related Works \label{sec:rel-work}\vspace{-8pt}}

\textbf{Removal of sensitive attributes from learned representations} was first explored in the context of fair classification \cite{zemel2013learning}. The task, often termed concept erasure, gained prominence with applications such as removing binary gender labels from \glove~embeddings \cite{pennington2014glove}, and has since evolved into a broader research area at the intersection of fairness, interpretability, and adversarial representation learning. Existing approaches can be categorized as \textit{post-hoc} and \textit{fine-tuned}. \textit{Post-hoc} methods directly modify learned representations without altering the parameters of the backbone model. In contrast, \textit{fine-tuned} approaches require access to the model and enforce erasure by finetuning its parameters. A summary of concept-erasure methods can be found in \Cref{summary_concept}.

\par\noindent \textbf{Linearly Guarded Erasure}: 
Early concept removal methods were linear, relying on identifying a direction associated with the unwanted concept and subtracting it from the representations \cite{bolukbasi2016man,gonen2019lipstick}. Iterative Nullspace Projection (INLP) improves on this by repeatedly projecting representations onto the nullspace of linear classifiers trained to predict the sensitive attribute, gradually removing its influence \cite{ravfogel2020null}. Relaxed Linear Adversarial Concept Erasure (R-LACE) formulates the task as a minimax game, where an adversarial classifier attempts to recover the concept being erased \cite{ravfogel2022linear}. LEAst-squares Concept Erasure (LEACE) takes a least-squares approach, yielding an oblique projection for linear concept removal \cite{belrose2024leace}. Spectral Attribute removaL (SAL) uses matrix decomposition to project representations into directions with lower covariance with the sensitive labels \cite{shao2022gold}. However, none of these methods is robust against nonlinear adversaries.
\begin{table*}[t]
  \centering
    \caption{\label{summary_concept} Characteristics of Existing Concept Erasure Methods.\vspace{-5pt}}
  \resizebox{0.67\linewidth}{!}{%
    \begin{tabular}{lccc}
      \toprule
      \textbf{Method} & \textbf{UnSupervised} & \textbf{Supervised} & \textbf{Erasure Model}  \\
      \midrule
      INLP~\cite{ravfogel2020null}  & Post-hoc & Post-hoc & Linear/Iterative    \\
      kSAL/SAL~\cite{shao2022gold}& Post-hoc & Post-hoc &  Nonlinear    \\
      AdS~\cite{chowdhury2021adversarial}& - & Fine-tuning &  Nonlinear    \\
      FaRM~\cite{chowdhury2022learning}  &  Post-hoc & Fine-tuning &  Nonlinear  \\
        KRaM~\cite{basu2023robust}  &  Post-hoc & Post-hoc &  Nonlinear  \\
       \methodName{} [Ours]   &  Post-hoc & Post-hoc &  Nonlinear/Iterative  \\
      \bottomrule
    \end{tabular}%
  }
  \vspace{-15pt}
\end{table*}

\par\noindent \textbf{Nonlinearly Guarded Erasure}: 
Kernel-based techniques have been used to extend linearly guarded erasure methods to nonlinear ones. For example, Kernelized Concept Erasure (KCE) and kSAL are kernelized versions of R-LACE and SAL, respectively. However, these methods only protects the unwanted attributes against a specific class of adversaries and fail to ensure complete nonlinearly guarded erasure \cite{ravfogel2022kernelized,shao2022gold}. Adversarial training has emerged as a common technique for concept erasure \cite{edwards2015censoring, li2018towards, wang2021dynamically,chowdhury2021adversarial}. In this method, an adversarial network is trained to predict the sensitive attribute, while the primary model learns to suppress it and preserve task-relevant information. A representative example is Adversarial Scrubbing (AdS) \cite{chowdhury2021adversarial}. However, adversarial removal is often incomplete, and such models can be challenging to train effectively \cite{elazar2018adversarial}. Another line of work for nonlinear concept erasure is based on rate-distortion maximization, as in FaRM \cite{chowdhury2022learning}, and its kernelized variant, KRaM \cite{chowdhury2023robust}. Notably, both AdS and FaRM require fine-tuning PLMs. However, we show that even these approaches are still vulnerable to nonlinear adversaries and protected attributes can still be partially recovered using a proper classifier.

\par\noindent\textbf{\methodName{}} addresses the gap in nonlinear guardedness by directly targeting the nonlinear statistical dependency between unwanted attributes and learned representations. We adopt a functional perspective to formulate this dependency. Similar to prior work \cite{dehdashtian2024utility,li2021self}, we use the Hilbert-Schmidt Independence Criterion (HSIC) as a proxy to minimize this dependency. We propose a two-step iterative optimization that minimizes this quantity while enabling more utility-preserving erasure.
\vspace{-8pt}
\section{Statistical Dependence Measures \label{sec:stat-dep}\vspace{-5pt}}
In this section, we explain how a linear statistical test can be derived from linear functions. We then extend this approach to a nonlinear test with functions from RKHS, leading to the definition of HSIC that captures nonlinear dependence of random variables and has a closed-form solution.
\par\noindent\textbf{Notation}: We denote scalars and functions using lowercase letters, e.g., $n$
and $f$. For deterministic vectors boldface lowercase letters, e.g., $\bm{x}, \bm{y}$ is used. Both scalar-valued and multidimensional random variables (RV)s are denoted by regular upper case letters, e.g., $X, Y$ . We denote deterministic matrices 
by boldface upper case letters, e.g., $\bm{K},\bm{L}$. Sets are showed by calligraphic letters, e.g., $\mathcal{F}, \mathcal{Z}$.\vspace{-5pt}
\subsection{Dependence via Witness Functions \label{sec:preliminary:witness} \vspace{-5pt}}
\noindent\textbf{Linear Statistical Dependence}:
Let $X \in \mathbb{R}^{n}$ and $Y \in \mathbb{R}^{m}$ be two RVs with the joint distribution $p_{X,Y}$. To design a linear statistical test, we can look for subspaces \(\bm{u}\) and \(\bm{v}\) that maximize the correlation of the projected RVs. Therefore, we can write:
\begin{equation}
\label{linear_dep}
\begin{aligned}
\sup_{\bm{v}} \;\;\sup_{\bm{u}}\;\; &\mathbb{E} \,[\bm{v}^T\bar{Y}\bar{X}^T\bm{u}] = \bm{v}^T\bm{C}_{yx}\bm{u}
,\quad \textrm{s.t} \quad \|\bm{u}\|_{\scriptscriptstyle 2} = \|\bm{v}\|_{\scriptscriptstyle 2} = 1
\end{aligned}
\end{equation}
\noindent where \(\bm{C}_{yx} = \mathbb{E}[\bar{Y}\bar{X}^T]\) is the cross-covariance matrix, and \(\bar{X} = X - \mathbb{E}[X]\) shows a centered RV. This optimization is bounded by the largest singular value of \(\bm{C}_{yx}\), with \(\bm{u}\) and \(\bm{v}\) as its right and left singular vectors. Equivalently, one may consider all non-zero singular values of the cross-covariance matrix and introduce their sum of square as a Linear Independence Metric (LIM):
\begin{equation}
\label{linear-hsic}
    \textrm{LIM}(X,Y) = \sum_{i=1}^{r_c} \sigma^2_i =  \|\bm{C}_{yx} \|^2_{\scriptscriptstyle F} 
\end{equation}
\noindent where $\|.\|_{\scriptscriptstyle F}$ denotes Frobenius norm and $r_c$ is the rank of $\bm{C}_{yx}$. This form of measuring dependence is similar to SAL \cite{shao2022gold} but fails to capture all modes of dependency when the dependence between \( X \) and \( Y \) is non-linear.
\par\noindent\textbf{Nonlinear Statistical Dependence}: We now extend the linear test, by replacing $\bm{u}$ and $\bm{v}$ with nonlinear functions. Let $\mathcal{F} : \bm{x} \to \phi(\bm{x})$ and $\mathcal{G} : \bm{y} \to \psi(\bm{y})$ be RKHSs where $\phi(.)$ and $\psi(.)$ are the corresponding feature maps. Also, let $\left\langle \mathbf{.} ,\mathbf{.} \right\rangle_{\mathcal{H}}$ be an inner product for the function space $\mathcal{H}$. For a nonlinear statistical test, we look for a pair of functions $f \in \mathcal{F}:\mathbb{R}^n\to\mathbb{R}$ and $g\in\mathcal{G}:\mathbb{R}^m\to\mathbb{R}$ that maximize the following objective:
\begin{equation}
\label{nonlinear_dep}
   \sup_{f}\;\; \sup_{g} \;\mathbb{E}[\bar{g}(Y)\bar{f}(X)] = \left\langle g,\cov_{yx}f\, \right\rangle_{\mathcal{G}} \quad
   \textrm{s.t}\quad \|g\|_{\scriptscriptstyle\mathcal{G}}\!=\!\|f\|_{\scriptscriptstyle\mathcal{F}}\!=\!1
\end{equation}\vspace{-4pt}
\par\noindent Here, \(\bar{f}(X) := f(X) - \mathbb{E}[f(X)]\), and  \(\cov_{yx} = \mathbb{E}[\bar{\psi}(Y) \otimes \bar{\phi}(X)] : \mathcal{F} \to \mathcal{G}\) is the cross-covariance operator, where \(\otimes\) denotes the tensor product, which extends the outer product to function spaces. The functions \(f\) and \(g\), act as \emph{witness functions} that aim to transform the dependency between the input variables \(X\) and \(Y\) into an approximately linear relationship in the transformed spaces \(f(X)\) and \(g(Y)\). The unit norm constraint on these functions encourages smoothness. Since the underlying data distribution is unknown in practice, this constraint is essential for reliably estimating the objective from finite samples. Without such a constraint, it is possible to find very complex functions that produce a nonzero objective even when \(X\) and \(Y\) are independent.\footnote{In extreme cases, such functions can be constructed using sharply peaked forms, such as the Dirac delta.} To ensure the test captures a meaningful dependency from finite samples, we assume that any dependency between $X$ and $Y$ exhibits a smooth structure. This smoothness can also be influenced by the choice of the underlying feature map in the RKHS.\vspace{-5pt}
\subsection{Hilbert-Schmidt Independence Criterion\label{sec:preliminary:hsic}}\vspace{-5pt}
The objective in \eqref{nonlinear_dep} has a closed-form solution, where \( f \) and \( g \) are the right and left singular functions of the cross-covariance operator \cite{gretton2005measuring}. Similar to the case of linear dependence, HSIC considers the sum of square of all singular values of the covariance operator as a metric for independence:
\begin{equation}
    \label{hsic}
    \textrm{HSIC}\big(\mathcal{F},\mathcal{G},p_{X,Y}\big) := \|\cov_{yx}\|_{\scriptscriptstyle H\!S}^2 = \sum_{i=1}^{r_c} \sigma_i^2 
\end{equation}
Here, \(\| \cdot \|_{\scriptscriptstyle H\!S}\) is the Hilbert-Schmidt norm, which generalizes the Frobenius norm to function spaces, and \(r_c\) is the rank of the cross-covariance operator.
\par\noindent\textbf{Empirical Estimation of HSIC}: Let $\phi(\bm{x})\,=\,k(.,\bm{x})$ and $\psi(\bm{y})\,=\,l(.,\bm{y})$ be the corresponding kernels of $\mathcal{F}$ and $\mathcal{G}$. Given a set of observations $\mathcal{D}:=\{(\bm{x}_i,\bm{y}_i)\}_{i=1}^n$ a biased estimator of HSIC is:
\begin{equation}
    \label{biasd-hsic}
    \widehat{\textrm{HSIC}}(\mathcal{F},\mathcal{G},\mathcal{D}) = \frac{1}{n^2}\textrm{trace}{(\bm{KHLH})} 
\end{equation}
\noindent where $\bm{K}$ and $\bm{L}$ are Kernel matrices for $\bm{x}$ and $\bm{y}$. $\bm{H} = \bm{I} - \tfrac{1}{n} \bm{1}_n\bm{1}_n^T$ is the centering matrix.\vspace{-8pt}
\subsection{Independence From a Functional Perspective\label{sec:en-indep-HSIC}}\vspace{-8pt}
 Under the assumption that the corresponding kernels of \(\mathcal{F}\) and \(\mathcal{G}\) are characteristic, \(X\) and \(Y\) are statistically independent if no pair of witness functions can be found that yield a nonzero objective in \Cref{nonlinear_dep}, or equivalently:  $\label{indep-hsic}
\textrm{\small HSIC} (\mathcal{F}, \mathcal{G}, p_{X,Y}) = 0$ \cite{gretton2005kernel}.
This principle distinguishes our method from prior kernel-based approaches, namely kSAL and KCE. These methods operate under the assumption that mapping a representation to RKHS and performing linear erasure in that space is sufficient for nonlinear guardedness. However, this only protects against linear adversaries within that specific feature space, remaining vulnerable to adversaries that exploit nonlinear patterns in that same space. This approach would guarantee statistical independence only if the RKHS feature map transforms representation to an approximately Gaussian distribution \cite{sadeghi2022on}.
\par\noindent In contrast, \methodName{} solves a more challenging problem: it seeks a representation $\varepsilon(X)$ where unwanted attribute $S$ remains undetectable even if transformed by a subsequent, adversarially chosen feature map $\phi(\cdot)$. This formulation results in a cascading kernel problem that lacks a closed-form solution and is more difficult to optimize. Mathematically, this is equivalent to finding a representation such that no pair of functions, $f$ and $g$, from a characteristic RKHS can be found to linearize the relationship between $f(\varepsilon(X))$ and $g(S)$ : 
\begin{equation}
\label{nested-optim}
\inf_{\theta}\;\sup_{g} \; \sup_{f} \;\mathbb{E}\left[\bar{g}(S)\;\bar{f}\left(\varepsilon(\bm{\theta};X)\right)\right] 
\end{equation}
where we assumed guarding function $\varepsilon$ is parameterized by $\bm{\theta}$. Thus, for the transformed RV $Z_\theta = \varepsilon(X)$ we have $\textrm{HSIC}\big(\mathcal{F}, \mathcal{G}, p_{Z_\theta,S}\big) = 0$ and, therefore $Z_\theta \perp \!\!\perp S$. \vspace{-4pt}

\begin{figure*}[!t]
    \centering
    \includegraphics[width=1\linewidth]{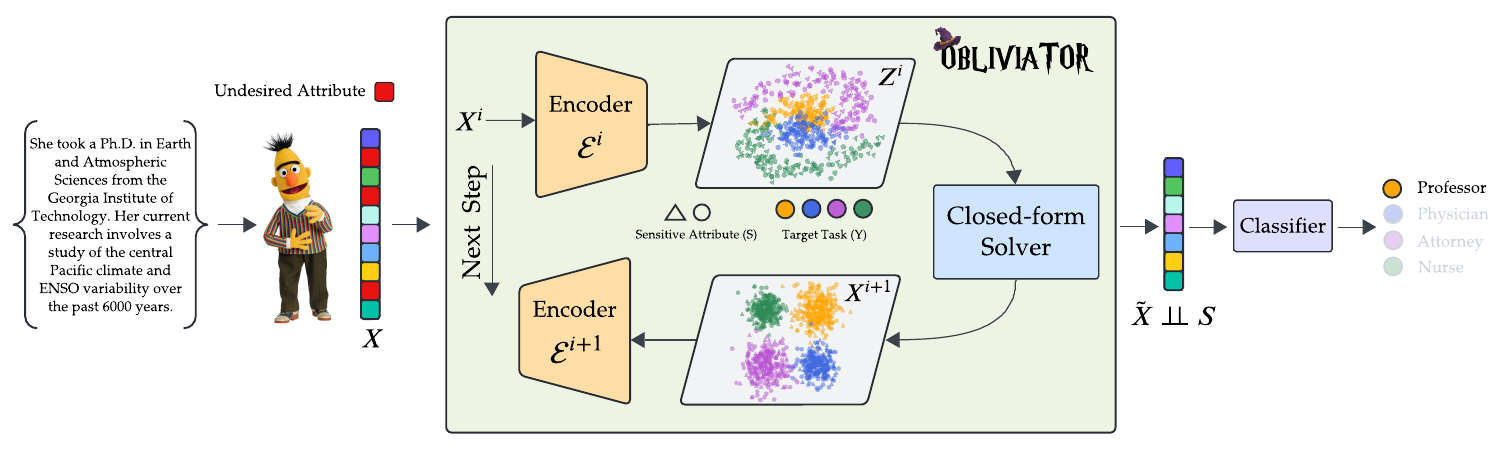}
    \caption{\textbf{Overview}.
\methodName{} operates with two-step iterations: 1) Imposing Independence via RKHS: An encoder is trained with a multi-objective loss \eqref{eq:loss_emp_iterative} to reduce statistical dependence on the unwanted attribute while preserving task-relevant information. 2) RKHS Disentanglement: Representations from the previous step are refined using functions derived from a constrained optimization in RKHS \eqref{eq:evp_smooth_confined}. This refinement enhances the feature space's alignment with the target attribute, which facilitates the encoder's training in the next iteration. (Image source: \href{https://muppet.fandom.com/wiki/Bert}{BERT})}  \vspace{-15pt}
\label{fig:obliviator}
\end{figure*}
\section{\methodName{}: Non-Linearly Guarded Erasure \label{sec:obliviator} \vspace{-5pt}}
\methodName{} solves \eqref{nested-optim}, a challenging nested optimization problem. This formulation, however, lacks an explicit term to enforce the preservation of task-relevant information during erasure. RKHS witness functions make the adversary search tractable and, by a similar logic, the observability of the task-relevant information by them can be utilized as a proxy for information preservation. To formalize this, let $X$ be the original representation, with $S$ and $Y$ denoting the unwanted attribute and target task, respectively. Assume $\mathcal{F}$, $\mathcal{G}$, and $\mathcal{H}$ are the RKHSs from which we draw adversary and witness functions. Recalling our guarding function as $Z_\theta=\varepsilon(\bm{\theta};X)$, we can write:
\begin{equation}
    \label{eq:bi-objective}
    \inf_{\bm{\theta}} \quad \textrm{HSIC}(\mathcal{F},\mathcal{G},p_{\scriptscriptstyle Z_{\theta},S}\big) - \textrm{ HSIC}\big(\mathcal{F},\mathcal{H},p_{\scriptscriptstyle Z_{\theta},Y}\big) 
\end{equation}
where we transformed the nested optimization $\eqref{nested-optim}$ into a simpler bi-objective form using RKHS. This problem still lacks a closed-form solution and thus requires non-convex optimization. Due to its two competing objectives, single-shot optimization is likely to converge to a poor solution. We adopt an iterative approach to solve it which provides intermediate representations, offering more leverage to preserve utility during erasure. Moreover, we employ witness functions to reshape these intermediate representations such that the components of information independent of the unwanted attribute become more accessible for the encoder in the next iteration (see \Cref{fig:obliviator} for an illustration).
\begin{tcolorbox}[left=2pt,right=2pt,top=2pt,bottom=2pt,colback=violet!5!white,colframe=violet!65!black, 
]
\textbf{\methodName{} Two-step Process:}
\begin{itemize}[leftmargin=*, topsep=2pt, itemsep=1pt]
  \item \emph{Imposing Independence via RKHS}: Utilizing RKHS, an encoder is trained to minimize the statistical dependence of the representation on unwanted attributes, while preserving task-relevant information by keeping it observable via witness functions.(\S\ref{sec:obliviator:encoder}).
  \item \emph{RKHS Disentanglement}: Solves a constrained optimization in RKHS to find functions that improve alignment of representation with utility, while avoiding such alignment with the unwanted attribute to facilitate the optimization in  next iteration.(\S\ref{sec:obliviator:evp}).
\end{itemize}
\end{tcolorbox}\vspace{-5pt}
\subsection{Encoder : Imposing Independence via RKHS\label{sec:obliviator:encoder}}
\noindent In our iterative method, we train a sequence of distinct encoders, where their inputs and outputs serve as intermediate RVs. Let $X^i$ be the input to the encoder at iteration $i$, and $Z_\theta^i=\varepsilon^i(\bm{\theta}^i;X^i)$ its output. Assume $X$, $Y$ and $S$ denote the RVs for the initial representation, target task, and unwanted attribute, respectively (see \Cref{fig:obliviator}). Our objective is to use witness functions as a proxy to impose the visibility of information about $Y$, $X$ and $X^i$ within $Z^i$, while minimizing this visibility for $S$ to achieve statistical independence. Recalling the objective in \eqref{eq:bi-objective} and empirical estimation of HSIC from \eqref{biasd-hsic} we can write:
\begin{equation}
    \label{eq:loss_emp_iterative}
    \inf_{\bm{\theta}^i} \quad  \frac{1}{n^2}\textrm{trace}\Big(\bm{K}_{z^i_\theta}\bm{H}\big(\bm{K}_s-\tau_x\bm{K}_x - \tau_{x^i}\bm{K}_{x^i} - \tau_y\bm{K}_y\big)\bm{H} \Big) 
\end{equation}
\noindent where $\bm{K}_{\bullet} \in \mathbb{R}^{n \times n}$ denotes the kernel matrix for the corresponding variable. Compared to \eqref{eq:bi-objective}, we introduce two modifications: 1) $\tau_{\bullet}$ are hyperparameters that weight the importance of each objective. 2) $X$ and $X^i$ are auxiliary RVs included to enforce utility preservation. These are particularly useful when target-task labels ($Y$) are unavailable. \par\noindent A natural question, however, is why $X$ and $X^i$ are necessary when $Y$ is available. Recall from \eqref{nonlinear_dep} that HSIC aggregates multiple "modes of visibility" (singular values) into a single scalar. Due to the competition between objectives, the optimization, which only sees this aggregate sum, is likely to discard weaker modes. The critical insight is that as information disentanglement evolves during optimization, the same mode of dependency is often weighted differently when measured relative to each RV. For instance, a specific mode of task-relevant information might be weakly expressed relative to $Y$ but remain more apparent relative to $X$ or $X^i$. Therefore, including $X$ and $X^i$ increases the likelihood that these modes are preserved during optimization.\vspace{-5pt}
\subsection{Eigen Value Problem : RKHS Disentanglement \label{sec:obliviator:evp}}
As discussed, the evolution of information disentanglement affects the visibility of dependency modes relative to each RV, a result of the competition between objectives in optimization. To counteract this, we introduce an intermediate step that utilizes witness functions to re-align the representation, making task-relevant information more accessible. Recalling the logic from \eqref{nonlinear_dep} and \eqref{eq:loss_emp_iterative}, we seek a function $f \in \mathcal{F}$ that maximizes the correlation of $Z^i_\theta$ with the corresponding witness functions $g_\mathcal{I}$ in $\mathcal{G}_\mathcal{I}$ for $X, X^i$, and $Y$ (where $\mathcal{I} = \{x^i, x, y\}$). However, the search for $f$ must be constrained to avoid modifying the visibility of $S$, which would effectively counteract the minimization from the previous iteration. To formalize this, we solve the following constrained optimization:
\begin{equation}
\label{smooth_erasure_objective}
\begin{gathered}
 \sup_{f} \;\; \sup_{\{g_{\scriptscriptstyle_\mathcal{I}}\}}\quad 
 \mathbb{E}[\bar{g}_{x^i}(X^i)\bar{f}(Z^i)]^2 
+\tau_x\,\mathbb{E}[\bar{g}_x(X)\bar{f}(Z^i)]^2 
+ \tau_y\,\mathbb{E}[\bar{g}_y(Y)\bar{f}(Z^i)]^2 \\
\text{s.t.} \quad \sup_{g_s} \; \mathbb{E}[\bar{g}_s(S)\bar{f}(Z^i)] = 0 , \quad
\|g_{\scriptscriptstyle\mathcal{I}}\|_{\scriptscriptstyle\mathcal{G}_{\mathcal{I}}} 
= \|f\|_{\scriptscriptstyle\mathcal{F}} = \|g_s\|_{\scriptscriptstyle\mathcal{G}_{s}}= 1 
\end{gathered}
\end{equation}
Here, $\tau_{\bullet}$ are hyperparameters that weight the importance of each objective. Let $\bm{K}_{\bullet} = \bm{L}_{{\bullet}} \bm{L}_{{\bullet}}^\top$ be a full-rank factorization of the kernel matrix for the corresponding RV. This $\bm{L}_{\bullet}$ can be viewed as a finite-dimensional feature map. Using this map, the empirical estimation of an objective term, for example between $Z^i$ and $Y$, can be written as:
\begin{equation}
    \label{eq:cross-cov-mat}
  \mathbb{E}[\bar{g}_{y}(Y)\bar{f}(Z^i)] \approx \frac{1}{n} \sum_{j=1}^n \bar{g}_{y}(\bm{y}_j)\bar{f}(\bm{z}_j^i) = \frac{1}{n}\bm{u}_y^T \bm{L}_y^T\bm{H}\bm{L}_{z^i}\bm{v} = \bm{u}_y^T \widehat{\bm{C}}_{yz^i} \bm{v}
\end{equation}
Here, $\bm{u}_y$ and $\bm{v}$ are the equivalence of $g_y$ and $f$ in the finite dimensional feature map. The optimal functions for \eqref{smooth_erasure_objective} can then be empirically estimated by solving the following eigenvalue problem :
\begin{equation}
   \label{eq:evp_smooth_confined}
\bm{Q}^T\!\Big(\widehat{\bm{C}}^T_{x^iz^i}\widehat{\bm{C}}_{x^iz^i}+\tau_y\,\widehat{\bm{C}}^T_{yz^i}\widehat{\bm{C}}_{yz^i}+\tau_x\,\widehat{\bm{C}}^T_{xz^i}\widehat{\bm{C}}_{xz^i}\Big)\bm{Q} \bm{v} = \lambda \bm{v}
\end{equation}
\noindent where $\bm{Q}$ is an orthonormal basis for $\textrm{Null}(\widehat{\bm{C}}_{sz^i})$. Given an eigenvalue threshold, we select $m$ eigenvectors from \eqref{eq:evp_smooth_confined} that correspond to this threshold. Let $\bm{V}=[\bm{v}_1,\dots,\bm{v}_m]$ denotes the selected eigen vectors, encoded representation for the next iteration is then given by:
\begin{equation}
    \label{eq:proj_smooth}
    \bm{X}^{i+1} = \bm{L}_{z^i} \bm{QV}
\end{equation}
We defer the derivation and practical implementation for \methodName{} to Appendix.\vspace{-5pt}
\begin{figure*}[t!]
    \centering
    \subcaptionbox{\dialment \label{fig:sup:mention}}{\scalebox{0.39}{\includegraphics[]{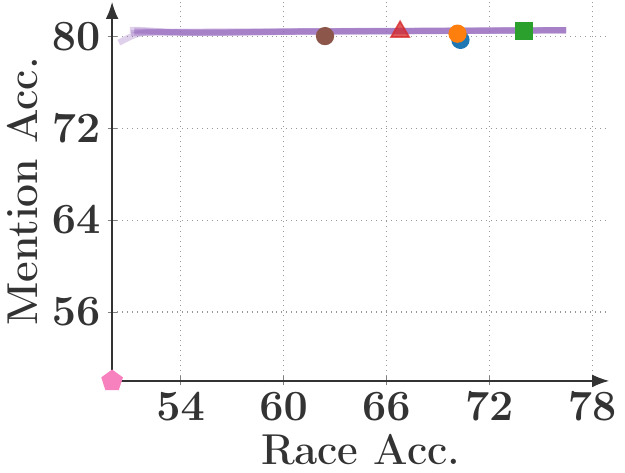}}}
        \subcaptionbox{\dialsent\label{fig:sup:sentiment}}{\scalebox{0.39}{\includegraphics[]{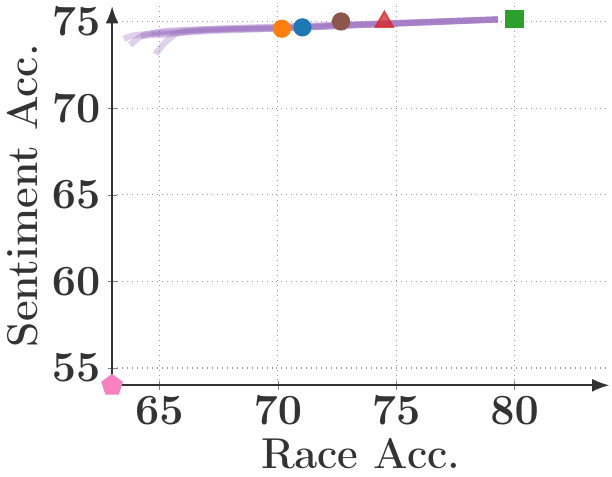}}}
        \subcaptionbox{\biasinbios\label{fig:sup:bios}}{\scalebox{0.39}{\includegraphics[]{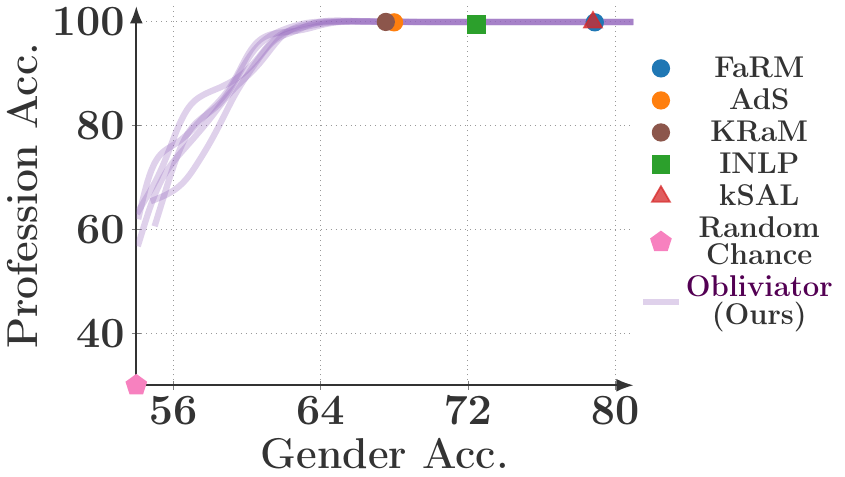}}} 
        \caption{\textbf{Finetuned+Supervised Erasure }:
       Comparison of \methodName{} with baselines for fine-tuned representations. \methodName{} leverages $Y$ labels during the erasure, a scheme which we refer to as supervised erasure."\vspace{-15pt}}
        \label{fig:sup}
\end{figure*}
\section{Experiments\label{sec:exp}}
\par\noindent\textbf{Evaluated PLMs:~} We consider four models: BERT(\texttt{bert-base-uncased})\citep{devlin2018bert}, GPT-2 \citep{radford2019language}, DeepSeek-LLM-7B-Chat \citep{deepseek-llm}, and Llama-3.2-1B-Instruct \citep{grattafiori2024Llama}. For BERT, we use the final hidden state of the \texttt{[CLS]} token as the text representation. For DeepMoji \cite{felbo-etal-2017-using} we use the setup as in \cite{ravfogel2020null,chowdhury2022learning}. For the other models, we apply mean pooling to their token embeddings. The embedding dimensions for DeepMoji, BERT, GPT-2, DeepSeek and LLaMa are 300, 768, 768, 4096, and 2048 respectively.\vspace{-2pt}
\par\noindent\textbf{Evaluated Setup:~} \methodName{} can utilize target task labels ($Y$), a setup we term \emph{supervised} erasure. In the \emph{unsupervised} setup, these $Y$ labels are unavailable. PLM representations are kept \emph{frozen}, with the exception of BERT, for which we also consider a \emph{fine-tuned} case where the model's weights are modified to improve disentanglement with respect to $Y$. We use three datasets in our experiments: \biasinbios~\citep{de2019bias} (Y: Profession, S: Gender), \dialsent~\citep{blodgett2016demographic} (Y: Sentiment, S: Race), and \dialment~\citep{blodgett2016demographic} (Y: Mention, S: Race). We compare \methodName{} against INLP, AdS, kSAL, FaRM, and KRaM as baselines.\vspace{-5pt}
\par\noindent \par\noindent\textbf{Utility-Erasure Trade-off:~} While concept erasure methods are typically evaluated on their final-stage, such evaluations overlook how competing objectives interact throughout the erasure process. Studying the utility-erasure trade-off provides deeper insight into this interaction, offering a more comprehensive assessment of a method's performance and its optimization. To construct these trade-off curves, we evaluate utility and leakage at each iteration. We measure attribute leakage by probing for the unwanted concept with various nonlinear adversaries and reporting their highest accuracy. We continue the erasure process until the probe accuracy for the unwanted attribute become close to \emph{random chance accuracy}, which is the majority class percentage and shown as the origin of all trade-off plots. We compute trade-off curves from five independent runs.
\begin{tcolorbox}[left=2pt,right=2pt,top=2pt,bottom=2pt,colback=violet!5!white,colframe=violet!65!black, 
]
\textbf{Questions Investigated in Experiments:}
\begin{itemize}[leftmargin=25pt,rightmargin=5pt, topsep=2pt, itemsep=1pt]
  \item[\S{\ref{sec:results:bert_model}}]\textit{How do different experimental scenarios affect utility-erasure trade-off?} Here, we analyze the effect of the erasure scheme (supervised vs. unsupervised), the initial representation (frozen vs. fine-tuned), and different datasets in BERT, a common benchmark that has been evaluated by many erasure methods.
  \item[\S{\ref{sec:results:other_languge_models}}]\textit{How does \methodName{} generalize across different language models?} Here, our goal is to investigate whether the better disentanglement of representations learned by more capable language models is reflected in the utility-erasure trade-off.
  \item[\S{\ref{sec:results:controlled_setup}}]\textit{How does biased sampling or data skewness affect the erasure?} Here, we use another common benchmark in concept erasure, a controlled setup that modifies the proportion of the unwanted attribute, to investigate the resulting trade-off with the Deepmoji model.
  \item[\S{\ref{sec:results:fairness}}]\textit{How do erasure schemes on different PLMs affect fairness metrics?} Here, we explore the effect of erasure on downstream fairness under these conditions.
\end{itemize}
\end{tcolorbox}\vspace{-5pt}
\subsection{How does different experimental scenarios affect utility-erasure trade-off ?\label{sec:results:bert_model}}\vspace{-5pt}
In this section, we compare \methodName{} against baselines to examine the effects of the erasure scheme (supervised vs. unsupervised), the initial representation (frozen vs. fine-tuned), and different datasets mentioned above. We use the BERT model for this analysis, as it is a common benchmark on which many erasure methods have been evaluated using these datasets. However, the utility-erasure trade-off in these contexts has not been previously explored. We consider all four combinations of erasure scheme and representation, namely : 
1)\textit{Finetuned+Supervised} (\Cref{fig:sup}), 
2)\textit{Frozen+Unsupervised} (\Cref{fig:unsup}), 
3)\textit{Frozen+Supervised} (\Cref{fig:sup_unsup_dial_bios}), 
4)\textit{Finetuned+Unsupervised} (\Cref{fig:sup_unsup_dial_bios}).
First, we study the initial two scenarios, which are designed to assess the impact of $Y$ label availability, either for fine-tuning or during the erasure process. The final two scenarios then investigate how the visibility of $Y$ information affects the erasure process.
\par \noindent \textbf{\emph{Erasure Performance:}} 
Considering \Cref{fig:sup} and \Cref{fig:unsup}, \methodName{} achieves nonlinear guardedness, capturing the full trade-off curve across all scenarios and datasets. This outcome is expected, as our method directly minimizes $\textrm{HSIC}\to 0$, a condition equivalent to independence from a functional perspective. In contrast, existing methods fail to achieve full erasure, particularly for fine-tuned representations. Notably, in \Cref{fig:sup}, the gap between full erasure (i.e., random chance) and the best-performing baseline is $12\%$, $13\%$, and $14\%$ for \dialment{}, \dialsent{}, and \biasinbios{}, respectively. Similar gaps of $12\%$, $9\%$, and $8\%$ are observed in \Cref{fig:unsup}. \methodName{} consistently outperforms all baselines, achieving higher Y accuracy at every level of S accuracy across all scenarios and datasets. This result stems from our algorithm's two core components: the utility-preserving loss in \eqref{eq:loss_emp_iterative} and the disentanglement process in \eqref{eq:evp_smooth_confined}. Notably, utilizing RKHS to formulate these two components in closed-form is a key contributor to the consistency of our optimization.
\begin{figure*}[t!]
    \centering
    \subcaptionbox{\dialment \label{fig:unsup:mention} }{\scalebox{0.39}{\includegraphics[]{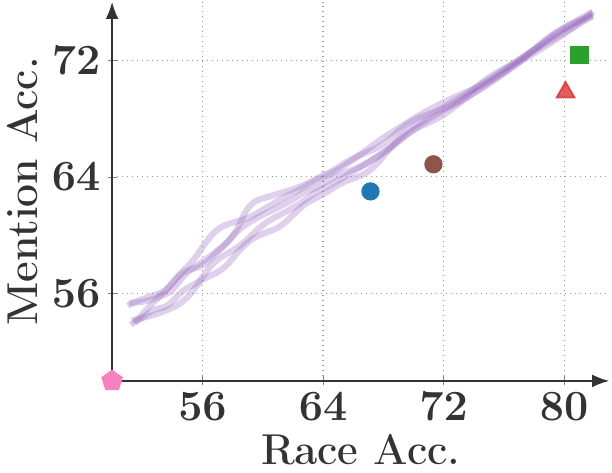}}}
    \hfill
\subcaptionbox{\dialsent\label{fig:unsup:sentiment}}{\scalebox{0.39}{\includegraphics[]{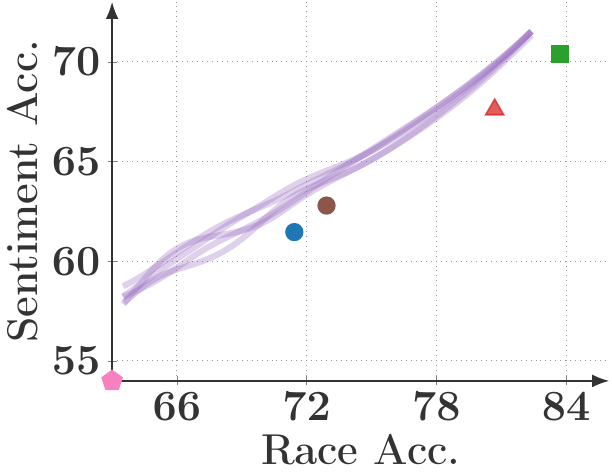}}}
    \hfill
        \subcaptionbox{\biasinbios \label{fig:unsup:bios}}{\scalebox{0.39}{\includegraphics[]{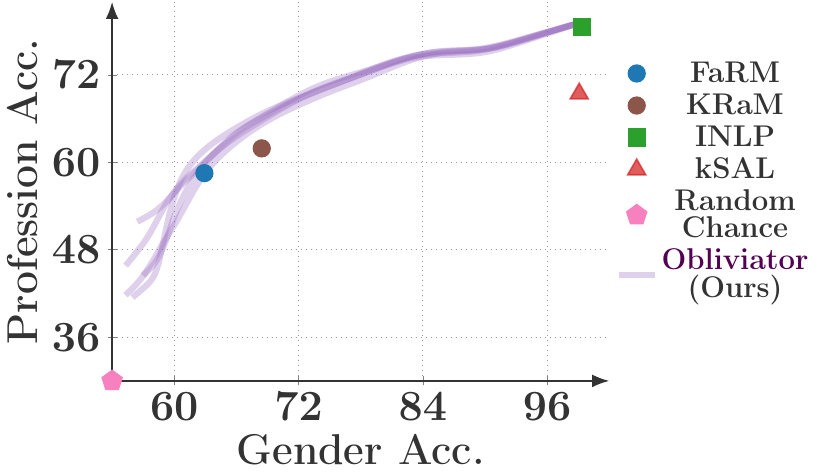}}}
        \caption{\textbf{Frozen+Unsupervised Erasure }: Comparison of \methodName{} and baselines with frozen representations. In unsupervised erasure, we implicitly observe $Y$ information from $X$ and $X^i$ and thereby we observe a more noticeable trade-off compared to \Cref{fig:sup}. \vspace{-20pt}}
        \label{fig:unsup}
\end{figure*}
\par \noindent \textbf{\emph{Cost of Erasure:}} Comparing the trade-off curves for each dataset across the different setups in \Cref{fig:sup} and \Cref{fig:unsup}, we observe a more noticeable trade-off in the \emph{frozen+unsupervised} scenario. This is consistent with our discussion in \Cref{sec:obliviator:encoder}. The key difference is that $Y$ provides an explicit proxy for the task-relevant modes. In contrast, $X$ and $X^i$ are implicit proxies; the optimization's priority for the task-relevant modes depends on how those modes are weighted relative to all other information encoded within these RVs. Therefore, preserving $Y$-relevant information is more likely in the supervised erasure setup. This distinction is particularly evident in \dialment{} and \dialsent{}, where \methodName{} preserves mention/sentiment information with minimal utility loss while successfully erasing race-related information.\vspace{-14pt}
\begin{wrapfigure}[15]{r}{0.54\textwidth}
    \vspace{3pt}
    \centering
    \scalebox{0.38}{\includegraphics[]{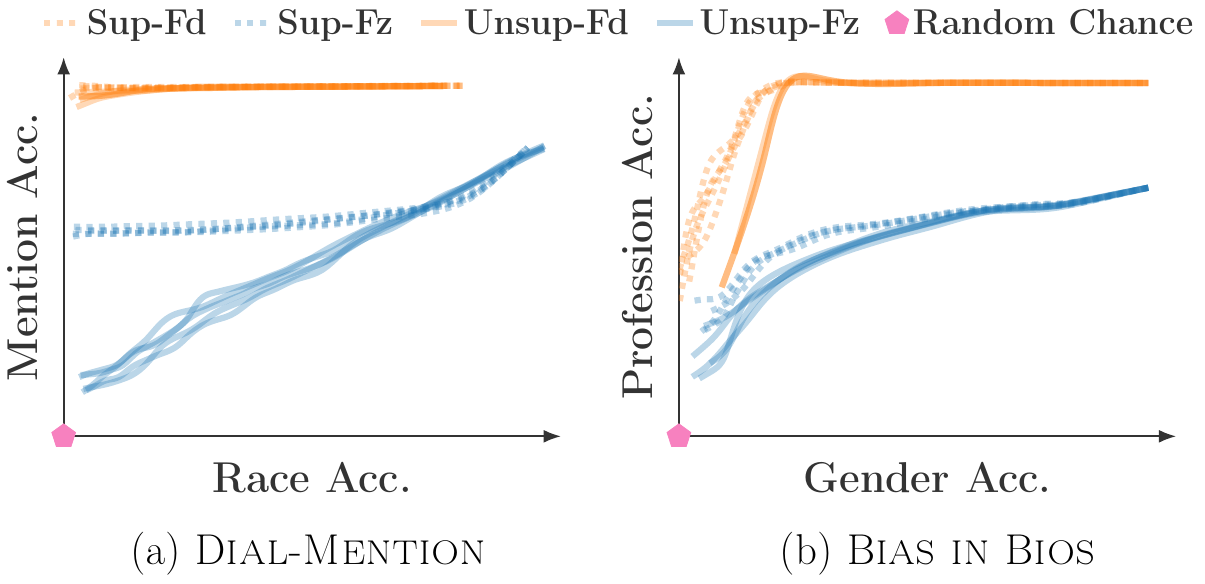}}
    \caption{\footnotesize Supervised and unsupervised erasure on fine-tuned and frozen representations. (Sup: Supervised, Unsup: Unsupervised, Fd: Finetuned, and Fz: Frozen.)}
    \label{fig:sup_unsup_dial_bios}
    \vspace{-15pt} 
\end{wrapfigure} 
\par\noindent \textbf{\emph{Effect of $Y$ Visibility on Erasure:}} 
To distinguish the effect of the $Y$ label on \emph{supervised} erasure from its effect via \emph{fine-tuning}, we examine two key scenarios: \textit{Frozen+Supervised} and \textit{Finetuned+Unsupervised}. First, we analyze \textit{Frozen+Supervised} for \dialment{} (\Cref{fig:sup_unsup_dial_bios}a). The high utility preservation confirms that supervised erasure alone is highly effective, strengthening our hypothesis from \Cref{sec:obliviator:encoder} that an explicit proxy for $Y$-relevant modes is more likely to preserve that information. Interestingly, the \textit{Finetuned+Unsupervised} setup for \dialment{} also preserves utility well. This indicates that fine-tuning strengthens the visibility of $Y$-relevant information when measured implicitly using $X$ and $X^i$, giving this information a higher priority in the optimization.
\par\noindent However, the behavior differs for \biasinbios{}. As shown in \Cref{fig:sup_unsup_dial_bios}b, the gap between erasure schemes on the frozen representation is less noticeable than \dialment{}. This could be due to the nature of the $Y$ labels: \biasinbios{} has 28 classes,  so it is likely that task-relevant modes have higher weights when measured by witness functions from $X$ and $X^i$. In contrast, for \dialment{} (binary $Y$), the signal is weaker, making the implicit proxies $X$ and $X^i$ less practical compared to an explicit $Y$ proxy for preserving the information.
For \textit{Finetuned+Unsupervised} erasure in \biasinbios{}, we again observe that preserving $Y$ information is possible up to a point, implicitly through $X$ and $X^i$, for a similar reason as in \dialment{}. However, in both the finetuned and frozen representations, we observe that supervised erasure preserves information much better. This can be attributed to the erasure process itself: as the representation evolves and becomes more disentangled, the visibility of $Y$ information through the implicit proxies ($X$ and $X^i$) may also evolve and weaken. Therefore, having an explicit proxy for $Y$ information becomes more valuable as the optimization proceeds.\vspace{-3pt}
\begin{tcolorbox}[left=2pt,right=2pt,top=2pt,bottom=2pt,colback=violet!5!white,colframe=violet!65!black
]
  \textbf{Takeaway 1.} In supervised erasure, \methodName{} utilizes an explicit term to observe $Y$-relevant information via witness functions (\eqref{eq:loss_emp_iterative} and \eqref{eq:evp_smooth_confined}). This provides a direct optimization signal that is more likely to preserve utility, especially in the final stages where erasure targets more entangled information.
  \par\noindent\textbf{Takeaway 2.} In unsupervised erasure, the initial visibility of $Y$-relevant information affects its priority in the competition of objectives when measured via the implicit proxies ($X$ and $X^i$), determining its preservation during erasure.
\end{tcolorbox}\vspace{-5pt}
\begin{figure*}[t!]
\centering
\subcaptionbox{GPT-2 Representations \label{fig:unsup:gpt2} }{\scalebox{0.39}{\includegraphics[]{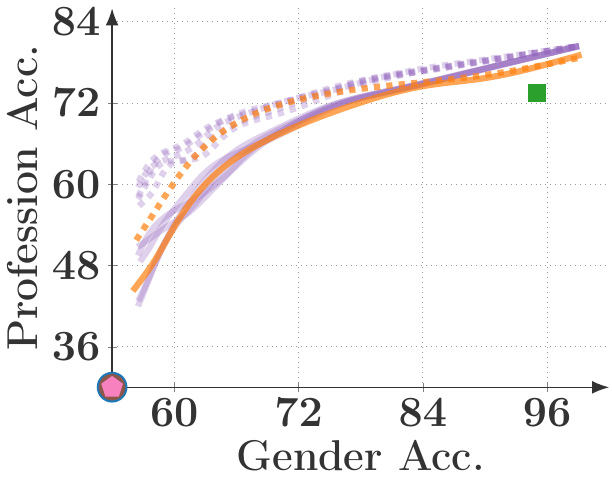}}}
\hfill
\subcaptionbox{DeepSeek Representations \label{fig:unsup:deepseek} }{\scalebox{0.39}{\includegraphics[]{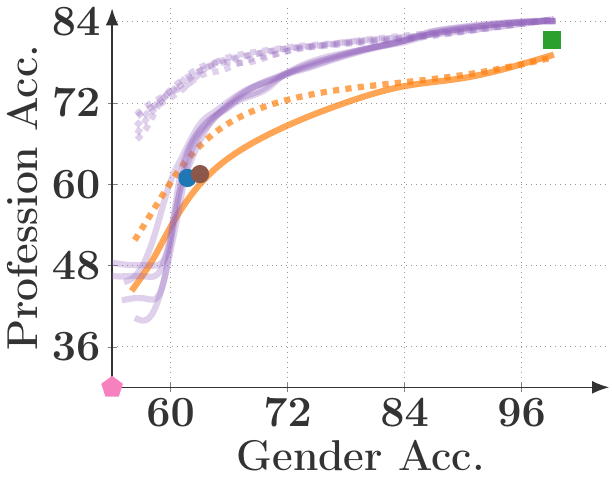}}}
\hfill
\subcaptionbox{LLaMA Representations\label{fig:unsup:bios:Llama}}{\scalebox{0.39}{\includegraphics[]{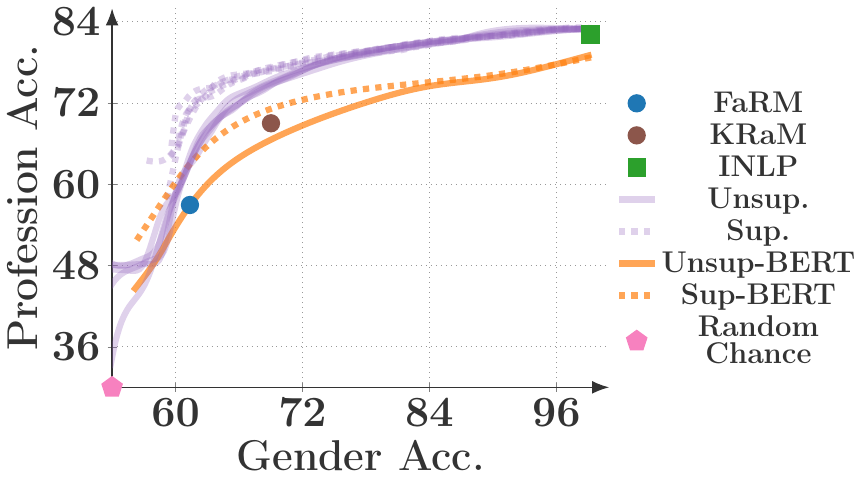}}}
\caption{\textbf{Erasure Across Different PLMs Compared to BERT.} The figure shows \emph{supervised} and \emph{unsupervised} erasure using frozen representations from GPT-2, DeepSeek, and LLaMa on \biasinbios{}.\vspace{-20pt}}
\label{fig:unsup:other_reps}
\end{figure*}
\subsection{How does \methodName{} generalize across different language models?  \label{sec:results:other_languge_models} \vspace{-5pt}}
Here, we examine the generalizability of \methodName{} across different PLMs. We hypothesize that more capable models should learn representations that are better disentangled. Consequently, a robust erasure approach should yield improved trade-offs when applied to them. To explore this, we use \emph{frozen} \biasinbios{} representations from GPT-2, LLaMa, and DeepSeek. \Cref{fig:unsup:other_reps} shows:
\par\noindent\textbf{\emph{GPT-2 Representations}}: 
The trade-off profile of GPT-2 closely matches that of BERT in both erasure schemes, an expected result given their comparable model complexity. However, the performance of several baselines, including INLP, FaRM, and KRaM, degrades on GPT-2. Notably, FaRM and KRaM drop to near random-chance accuracy on the target task.
\par\noindent\textbf{\emph{LLaMa Representations}}:
Compared to BERT, we observe clear improvements in both erasure schemes. As LLaMa provides representations with better initial disentanglement of $Y$ and $S$, \methodName{} leverages this to achieve a more utility-preserving erasure. This result demonstrates the generalizability of our method and the effectiveness of its optimization approach. The performance of FaRM and INLP remains largely unchanged from BERT, and while KRaM shows some improvement, it still fails to achieve full erasure. Notably, \methodName{} consistently outperforms all baselines.
\par\noindent\textbf{\emph{DeepSeek Representations}}:
In the \emph{unsupervised} erasure, DeepSeek representations yield a performance improvement over BERT similar to that observed with LLaMa. Among the baselines, KRaM's erasure performance improves compared to its results on LLaMa representations, while FaRM's behavior remains roughly unchanged. For target task accuracy, KRaM's performance degrades, whereas FaRM shows some improvement. \methodName{} consistently outperforms all baselines. Interestingly, in \emph{supervised} erasure, we observe a significant improvement in the trade-off profile, with only mild degradation in target accuracy as erasure completes. This demonstrates that achieving full erasure on the \biasinbios{} dataset while maintaining substantial target task information is possible. This strongly suggests that the observed utility-erasure trade-off is a diagnostic of the specific model's learned dependencies, not necessarily an inherent real-world link.
\begin{tcolorbox}[left=2pt,right=2pt,top=2pt,bottom=2pt,colback=violet!5!white,colframe=violet!65!black
]
\textbf{Takeaway .}
  With better disentangled representations learned by more capable PLMs, \methodName{}'s erasure becomes more utility-preserving, indicating its strong generalizability.
\end{tcolorbox}\vspace{-4pt}
\subsection{How does biased sampling or data skewness affect the erasure?  \label{sec:results:controlled_setup} \vspace{-2pt}}
In this section, we explore the effect of biased sampling on erasure, using a controlled setup recognized as a classic benchmark in concept erasure \cite{elazar2018adversarial,ravfogel2020null,chowdhury2022learning}. Following this setup, we use the \dialsent{} dataset, DeepMoji representations, and an unsupervised erasure scheme. For biased sampling, we modified the proportion of the unwanted attribute (Race) within each target task class (Sentiment). For instance, in the $80\%$ split, the "happy" sentiment class is composed of $80\%$ African-American English (AAE) and $20\%$ Standard American English (SAE), while the "sad" sentiment class is composed of $20\%$ AAE and $80\%$ SAE.We hypothesized that increasing this imbalance would worsen the trade-off, as a skewed sample provides a biased estimate of the true statistical dependence (HSIC) and makes finding a utility-preserving solution more difficult. \Cref{fig:controlled_setup} confirms this hypothesis, as we observe the trade-off becomes more noticeable as the sampling skew increases. This highlights an inherent challenge for post-hoc erasure: the observed trade-off is highly dependent on how well the erasure dataset represents the true data distribution.\vspace{-3pt}
\begin{tcolorbox}[left=2pt,right=2pt,top=2pt,bottom=2pt,colback=violet!5!white,colframe=violet!65!black]
\textbf{Takeaway .}
  Skewed sampling worsens the trade-off due to increased bias in estimation of HSIC.
\end{tcolorbox}\vspace{-9pt}
\begin{figure}[t!]
    \centering
    \subcaptionbox{Controlled Setup \label{fig:controlled_setup} }{\scalebox{0.4}{\includegraphics[]{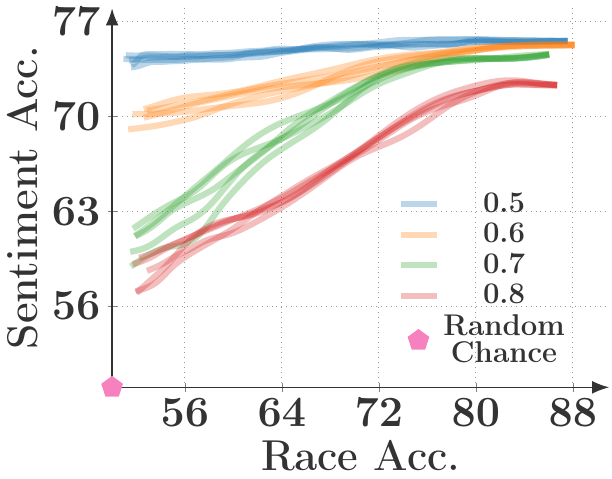}}}
\hfill
    \subcaptionbox{ DP \label{fig:unsup:dial:dp} }{\scalebox{0.4}{\includegraphics[]{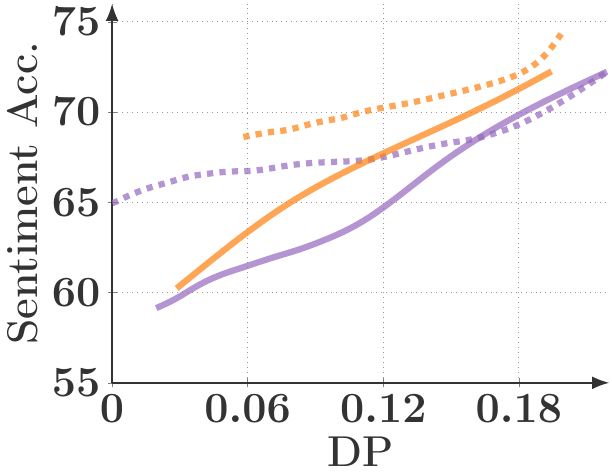}}}
\hspace{2pt}
    \subcaptionbox{ $\textrm{Gap}_{\text{rms}}$ \label{fig:unsup:dial:tprgap} }{\scalebox{0.4}{\includegraphics[]{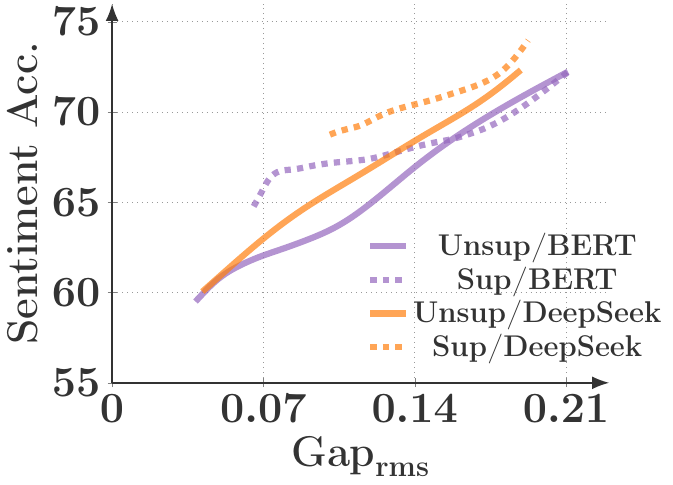}}}
\caption{ (\textbf{a}). Unsupervised erasure for DeepMoji representations on \dialsent{}, plotted against varying levels of unwanted attribute disproportion. (\textbf{b-c}). Demographic Parity (DP) and $\textrm{Gap}_{\text{rms}}$ across different erasure scheme and PLMs in \dialsent{}. \vspace{-15pt}}
\label{fig:unsup:dial:dp&tprgap}
\end{figure}
\subsection{How do erasure schemes on different PLMs affect fairness metrics?  \label{sec:results:fairness} \vspace{-5pt}}
Here, we investigate the effect of erasure on downstream fairness, speculating that imposing statistical independence w.r.t unwanted attribute will improve fairness metrics. We use Demographic Parity, $\textrm{DP}= \mathbb{E}_{\,C}[|\,p(\hat{Y}=C|S=0) - p(\hat{Y}=C|S=1)\,|]$, and $\textrm{Gap}_{\text{rms}}= \mathbb{E}_{\,C}[(p(\hat{Y}=C|Y=C,S=0) - p(\hat{Y}=C|Y=C,S=1))^2]^{1/2}$, where $\hat{Y}$ is the prediction of a model. With statistical independence w.r.t $S$, fairness metrics should improve and theoretically $\textrm{DP} \to 0$. We chose \dialsent{} as our dataset, with DeepSeek and BERT as the two PLMs, as they encode entanglements differently. The results for DP and $\textrm{Gap}_{\text{rms}}$ are shown in \Cref{fig:unsup:dial:dp} and \Cref{fig:unsup:dial:tprgap}, respectively. \methodName{} improves fairness metrics across erasure schemes, with the supervised scheme being more utility-preserving. Moreover, due to the better disentanglement of representations in DeepSeek, we observe a higher utility for any given level of unfairness compared to BERT.\vspace{-2pt}
\begin{tcolorbox}[left=2pt,right=2pt,top=2pt,bottom=2pt,colback=violet!5!white,colframe=violet!65!black
]
\textbf{Takeaway .}
  \methodName{} aids fairness by imposing statistical independence w.r.t unwanted attributes.
\end{tcolorbox}
\vspace{-9pt}

\section{Concluding Remarks \label{sec:conclusion} \vspace{-8pt}}
Concept erasure aims to remove unwanted attributes from representations while guarding against all adversaries. Existing methods fail to address this challenge, as they remain vulnerable to nonlinear adversaries. We propose \methodName{}, which formulates erasure from a functional perspective to capture nonlinear statistical dependencies. We solve the resulting, challenging nested optimization with a two-step iterative approach. We investigate the effect of erasure more comprehensively by analyzing the full utility-erasure trade-off, a dynamic overlooked in the literature. We explored how different erasure schemes and the disentanglement of the original representation affect this trade-off. Our results show that \methodName{} achieves state-of-the-art performance, provides robustness against nonlinear adversaries, and sets a strong benchmark for the utility-erasure trade-off. This performance remains consistent across various datasets and language models. The improved disentanglement of representations from more capable models is directly reflected in \methodName{}'s utility-erasure trade-off profiles, indicating its strong generalizability and effectiveness of its optimization approach.

\noindent\textbf{Acknowledgements:} This work was supported by the National Science Foundation (award \#2147116). Any opinions, findings, conclusions, or recommendations expressed in this material are those of the authors and do not necessarily reflect the views of the NSF.

\bibliographystyle{plainnat}
\bibliography{ref}

\newpage
\appendix

\renewcommand{\thesection}{\Alph{section}}

\begin{center}
    \large{\textbf{Supplementary Material for \methodName{}}}
\end{center}

This supplementary material provides additional details to support the main paper. It includes mathematical proofs, implementation specifics, ablation studies, and additional results to further illustrate and validate the effectiveness of \methodName{}. 
The contents are organized as follows:

\begin{itemize}
\item Mathematical Proofs. (\S \ref{sec:appendix:math_proofs})
    \begin{itemize}
        \item Cross-Covariance Operator ( \S \ref{sec:appendix:math_proofs:cross_covar})
        \item Hilbert-Schmidt Operators (\S \ref{sec:appendix:math_proofs:HSO})
        \item Empirical Estimation of the Objective (\S \ref{sec:appendix:math_proofs:emp_estim})
        \item Finite-Dimensional Feature Map (\S \ref{sec:appendix:math_proofs:finit_dim_map})
        \item RKHS Encoders for Feature-Space Alignment (\S\ref{sec:appendix:math_proofs:finit_dim_map:rkhs_enc})
        
    \end{itemize}
    \item Implementation Details. (\S \ref{sec:app:implement})
    \begin{itemize}
        \item \methodName{}'s Training Procedure (\S \ref{sec:app:oblv-proc})
        \item Datasets (\S \ref{appendix:datasets})
        \item Experimental Setup (\S \ref{sec:exp_setup})
        \item Computational Complexity and Scalability of \methodName{} (\S \ref{sec:appendix:prob_net:comp_complx})
    \end{itemize}
     \item Probing Networks. (\S \ref{sec:appendix:prob_networks})
    \begin{itemize}
        \item Our Choice of Probing Networks (\S \ref{sec:probing_ours})
        \item Ablation Studies with Different Probing Networks. (\S \ref{sec:appendix:prob_networks:analysis })
    \end{itemize}
    \item Additional Results. (\S \ref{sec:appendix:add_results})
    \begin{itemize}
        \item Single Step Erasure Vs Multi-Step Erasure (\S \ref{sec:appendix:add_results:single_vs_multi})
        \item Further Results on Fairness (\S \ref{sec:appendix:add_results:controlled_further})
        \item Hyperparameter Sensitivity (\S \ref{sec:appendix:add_results:hyperparams})
        \item Visualization (\S \ref{sec:appendix:add_results:visual})
    \end{itemize}
    \item Limitation (\S \ref{sec:appendix:limitations})
    \item Societal Impact (\S \ref{appendix:societal})
\end{itemize}

\section{Mathematical Proofs \label{sec:appendix:math_proofs}}
\subsection{Cross-Covariance Operator \label{sec:appendix:math_proofs:cross_covar}}
Let $\mathcal{F}:\bm{x} \mapsto \phi(\bm{x})$ and $\mathcal{G}:\bm{y} \mapsto \psi(\bm{y})$ denote reproducing kernel Hilbert spaces (RKHSs) with feature maps \(\phi\) and \(\psi\), respectively. We aim to find functions \(f \in \mathcal{F}\) and \(g \in \mathcal{G}\) that maximize the following objective:
\begin{equation}
  \sup_{g \in \mathcal{G}} \; \sup_{f \in \mathcal{F}} \quad \mathbb{E}\left[\left(f(\bm{x}) - \mathbb{E}[f(\bm{x})]\right)\left(g(\bm{y}) - \mathbb{E}[g(\bm{y})]\right)\right] \quad s.t \quad \|f\|_{\mathcal{F}} = \|g\|_{\mathcal{G}}=1
\end{equation}
Using the reproducing property of RKHSs, the centered evaluation of \(f\) can be expressed as:
\begin{equation}
\begin{aligned}
\bar{f}(\bm{x})  &= \langle \phi(\bm{x}), f \rangle_{\mathcal{F}} - \mathbb{E} [\,\langle \phi(\bm{x}), f \rangle_{\mathcal{F}}\,]  \\[4pt]
&=\langle \phi(\bm{x}), f \rangle_{\mathcal{F}} -  \,\langle \mathbb{E}[\phi(\bm{x})]\,, f \rangle_{\mathcal{F}}
\\[4pt]
&= \langle \phi(\bm{x}) - \mathbb{E}[\phi(\bm{x})], f \rangle_{\mathcal{F}} \\[4pt]
&= \langle \bar{\phi}(\bm{x}), f \rangle_{\mathcal{F}}
\end{aligned}
\end{equation}
where \(\bar{\phi}(\bm{x}) := \phi(\bm{x}) - \mathbb{E}[\phi(\bm{x})]\). A similar expression holds for \(g(\bm{y})\). Substituting into the objective, we obtain:
\begin{equation}
\label{eq:cov_obj}
\sup_{g \in \mathcal{G}} \; \sup_{f \in \mathcal{F}} \quad \mathbb{E}\left[ \bar{f}(\bm{x}) \bar{g}(\bm{x}) \right]  = \mathbb{E}\left[\langle \bar{\phi}(\bm{x}), f \rangle_{\mathcal{F}} \; \langle \bar{\psi}(\bm{y}), g \rangle_{\mathcal{G}} \right]
\end{equation}
Recall that the tensor product $g \otimes f : \mathcal{F} \to \mathcal{G}$ is a rank-one operator defined by:
\begin{equation}
    \label{eq:tensor_prod}
  (g \otimes f)h := g  \langle f, h \rangle_{\mathcal{F}}, \quad \text{for all } h \in \mathcal{F}.  
\end{equation}
Using this, the objective becomes:
\begin{equation}
\label{eq:pop_objective}
\begin{aligned}
\mathbb{E}\left[\langle \bar{\phi}(\bm{x}), f \rangle_{\mathcal{F}} \; \langle \bar{\psi}(\bm{y}), g \rangle_{\mathcal{G}} \right] 
&= \mathbb{E}\left[\langle \bar{\psi}(\bm{y})\,\langle \bar{\phi}(\bm{x}) , f\rangle\,, g \rangle_{\mathcal{G}} \right]\\[4pt]
&= \mathbb{E}\left[\langle \bar{\psi}(\bm{y}) \otimes \bar{\phi}(\bm{x}) \, f, g \rangle_{\mathcal{G}} \right] \\[4pt]
&= \left\langle \mathbb{E}[\bar{\psi}(\bm{y}) \otimes \bar{\phi}(\bm{x})] \, f, g \right\rangle_{\mathcal{G}} \\[4pt]
&= \langle \mathbb{C}\mathrm{ov}_{yx} f, g \rangle_{\mathcal{G}} = \langle g ,\mathbb{C}\mathrm{ov}_{yx} f\rangle_{\mathcal{G}}
\end{aligned}
\end{equation}
where $\mathbb{C}\mathrm{ov}_{yx} := \mathbb{E}[\bar{\psi}(\bm{y}) \otimes \bar{\phi}(\bm{x})]$ denotes the cross-covariance operator from $\mathcal{F}$ to $\mathcal{G}$. 
\par\noindent\textbf{\emph{Remark.}}
Strictly speaking, interchanging the expectation with the inner product requires justifying that $\bar{\psi}(\bm{y}) \otimes \bar{\phi}(\bm{x})$ defines a Hilbert–Schmidt (HS) operator and that its expectation exists in the Hilbert space of HS operators. We provide a minimal justification in the following section. The derivation above is retained for its close resemblance to the finite-dimensional cross-covariance formulation.
\subsection{Hilbert–Schmidt Operators \label{sec:appendix:math_proofs:HSO}}
Let $\mathcal{F}$ and $\mathcal{G}$ be separable Hilbert spaces and
$\{q_i\}_{i=1}^{\infty}$ an orthonormal basis of $\mathcal{F}$.
For a bounded operator $\mathscr{L}:\mathcal{F}\!\to\!\mathcal{G}$ the \emph{Hilbert–Schmidt(HS)} norm is defined as
\begin{equation}
  \|\mathscr{L}\|_{\scriptscriptstyle\mathrm{HS}}^{2}
     =\sum_{i=1}^{\infty}\|\mathscr{L}q_i\|_{\mathcal{G}}^{2}
\end{equation}
If this series converges, $\mathscr{L}$ is called \emph{Hilbert–Schmidt}. The set of Hilbert-Schmidt operators mapping from $\mathcal{F}$ to $\mathcal{G}$ is a Hilbert space
denoted by $\mathrm{HS}(\mathcal{F},\mathcal{G})$ with the inner product
\begin{equation}
\label{eq:hs_inner}
  \langle\mathscr{L},\mathscr{T}\rangle_{\scriptscriptstyle\mathrm{HS}}
     =\sum_{i=1}^{\infty}
        \langle\mathscr{L}q_i,\mathscr{T}q_i\rangle_{\mathcal{G}}
\end{equation}
\textbf{Rank one tensor product is HS.} For $f\in\mathcal{F}$ and $g\in\mathcal{G}$:
\begin{equation}
\label{eq:tensor_hs}
\begin{aligned}
  \|g\!\otimes\!f\|_{\scriptscriptstyle\mathrm{HS}}^{2}
     &=\sum_{i=1}^{\infty}
        \|g\langle f,q_i\rangle_{\mathcal{F}}\|_{\mathcal{G}}^{2}  \\[4pt]
     &=\|g\|_{\mathcal{G}}^{2}
       \sum_{i=1}^{\infty}|\langle f,q_i\rangle_{\mathcal{F}}|^{2} \\[4pt]
     &=\|g\|_{\mathcal{G}}^{2}\,\|f\|_{\mathcal{F}}^{2}<\infty
\end{aligned}
\end{equation}
so $g\otimes f\in\mathrm{HS}(\mathcal{F},\mathcal{G})$.
\begin{lemma*}
For any $\mathscr{L}\in\mathrm{HS}(\mathcal{F},\mathcal{G})$,
$f\in\mathcal{F}$, and $g\in\mathcal{G}$,
\begin{equation}
  \langle\mathscr{L},\,g\!\otimes\!f\rangle_{\scriptscriptstyle\mathrm{HS}}
     =\langle g,\mathscr{L}f\rangle_{\mathcal{G}}
\end{equation}
\end{lemma*}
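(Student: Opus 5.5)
The plan is a direct computation from the definition of the HS inner product in \eqref{eq:hs_inner}, using the rank-one action in \eqref{eq:tensor_prod} and a Parseval expansion of $f$. Fix the orthonormal basis $\{q_i\}$ of $\mathcal{F}$ used in \eqref{eq:hs_inner}. Since $(g\otimes f)q_i = g\,\langle f,q_i\rangle_{\mathcal{F}}$, we get
\begin{equation*}
\langle\mathscr{L},g\otimes f\rangle_{\scriptscriptstyle\mathrm{HS}}
=\sum_{i=1}^{\infty}\langle \mathscr{L}q_i,\, g\,\langle f,q_i\rangle_{\mathcal{F}}\rangle_{\mathcal{G}}
=\sum_{i=1}^{\infty}\langle f,q_i\rangle_{\mathcal{F}}\,\langle \mathscr{L}q_i, g\rangle_{\mathcal{G}} .
\end{equation*}
Here we pull the scalar out of the second slot. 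If the spaces are complex, the scalar comes out conjugated; in the real setting of the paper it is simply $\langle f,q_i\rangle$.

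Next, expand $f=\sum_i \langle f,q_i\rangle_{\mathcal{F}}\,q_i$, which converges in $\mathcal{F}$. Because $\mathscr{L}$ is bounded, hence continuous, and linear, $\mathscr{L}f=\sum_i \langle f,q_i\rangle_{\mathcal{F}}\,\mathscr{L}q_i$ in $\mathcal{G}$. By continuity of $\langle\cdot,g\rangle_{\mathcal{G}}$, we then get $\langle \mathscr{L}f,g\rangle_{\mathcal{G}}=\sum_i \langle f,q_i\rangle_{\mathcal{F}}\langle \mathscr{L}q_i,g\rangle_{\mathcal{G}}$. This matches the series above, and by symmetry of the real inner product it equals $\langle g,\mathscr{L}f\rangle_{\mathcal{G}}$.

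The only delicate point is convergence. We must justify that the series defining $\langle\mathscr{L},g\otimes f\rangle_{\mathrm{HS}}$ converges absolutely, so that rearranging or regrouping and identifying it with the limit is legitimate. I would handle this with Cauchy--Schwarz:
\begin{equation*}
\sum_i |\langle f,q_i\rangle|\,|\langle \mathscr{L}q_i,g\rangle|\le \|g\|_{\mathcal{G}}\Big(\sum_i|\langle f,q_i\rangle|^2\Big)^{1/2}\Big(\sum_i\|\mathscr{L}q_i\|^2_{\mathcal{G}}\Big)^{1/2}=\|g\|_{\mathcal{G}}\|f\|_{\mathcal{F}}\|\mathscr{L}\|_{\mathrm{HS}}<\infty .
\end{equation*}
This uses the HS assumption on $\mathscr{L}$ and the computation \eqref{eq:tensor_hs} showing that $g\otimes f$ is HS. Absolute convergence also shows the result does not depend on the chosen basis. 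Combined with the continuity argument above, this completes the proof.
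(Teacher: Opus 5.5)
Your proof is correct, but it takes a different route from the paper's. The paper avoids any limiting argument by choosing an orthonormal basis of $\mathcal{F}$ whose first element is $\tilde f = f/\|f\|_{\mathcal{F}}$. Then $(g\otimes f)q_i = 0$ for every other basis vector, and the HS series collapses to the single term $\langle \mathscr{L}\tilde f,\, g\,\|f\|_{\mathcal{F}}\rangle_{\mathcal{G}} = \langle g,\mathscr{L}f\rangle_{\mathcal{G}}$. No convergence or continuity argument is needed.

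That shortcut silently uses two things: that the HS inner product does not depend on the orthonormal basis (the definition \eqref{eq:hs_inner} is written for a fixed $\{q_i\}$), and that $f\neq 0$. Your computation instead works in whatever basis the definition uses, via the expansion of $f$ and continuity of $\mathscr{L}$ and of $\langle\cdot,g\rangle_{\mathcal{G}}$. It handles $f=0$ without a separate case. Since the right-hand side is basis-free, it also shows that $\langle\mathscr{L},g\otimes f\rangle_{\mathrm{HS}}$ takes the same value for every orthonormal basis, which is exactly what licenses the paper's choice of an adapted basis.

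Two small refinements. First, the $N$-th partial sum of your series is exactly $\langle \mathscr{L}\bigl(\sum_{i\le N}\langle f,q_i\rangle_{\mathcal{F}}\, q_i\bigr), g\rangle_{\mathcal{G}}$. So continuity alone identifies the limit in the given order, and no rearrangement ever occurs; the Cauchy--Schwarz bound is only needed if you read the HS inner product as an unordered sum. Second, absolute convergence by itself gives independence of the \emph{ordering} only. Independence of the \emph{basis} comes from the identity holding for every orthonormal basis with a basis-free right-hand side.
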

\begin{proof}
Choose an orthonormal basis of $\mathcal{F}$ that begins with the normalised
vector $\tilde{f}:=f/\|f\|_{\mathcal{F}}$, i.e.\ 
$\{\tilde{f}\}\cup\mathcal{B}_{\perp}$ with
$\mathcal{B}_{\perp}:=\{\tilde{f}^{\perp}_{i}\}_{i=1}^{\infty}$ :
\begin{equation}
\begin{aligned}
  \langle\mathscr{L},\,g\!\otimes\!f\rangle_{\scriptscriptstyle\mathrm{HS}}
     &=
     \langle\mathscr{L}\tilde{f},\,g\!\otimes\!f\,\tilde{f}\rangle_{\mathcal{G}}
     +
     \overbrace{\sum_{q_i\in\mathcal{B}_{\perp}}
       \langle\mathscr{L}q_i,\,g\!\otimes\!f\,q_i\rangle_{\mathcal{G}}}^{=\,0} \\[4pt]
     &=
     \frac{1}{\|f\|_{\mathcal{F}}}\,
     \langle\mathscr{L}f,\,g\,\langle f,\tilde{f}\rangle_{\mathcal{F}}
     \rangle_{\mathcal{G}}  \\[4pt]
     &=\langle g,\mathscr{L}f\rangle_{\mathcal{G}}
\end{aligned}
\end{equation}
The second term vanishes because $f$ is orthogonal to every
$q_i\!\in\!\mathcal{B}_{\perp}$.
\end{proof}
\par\noindent Using this lemma we have :
\begin{equation}
  \langle
     \bar{\psi}(\bm{y})\!\otimes\!\bar{\phi}(\bm{x}),\,
     g\!\otimes\!f
  \rangle_{\scriptscriptstyle\mathrm{HS}}
  =
  \langle\bar{\phi}(\bm{x}),f\rangle_{\mathcal{F}}\,
  \langle\bar{\psi}(\bm{y}),g\rangle_{\mathcal{G}}
\end{equation}
and
\begin{equation}
  \mathbb{E}\bigl[\bar{f}(\bm{x})\bar{g}(\bm{y})\bigr]
    =\mathbb{E}\bigl[
       \langle
          \bar{\psi}(\bm{y})\!\otimes\!\bar{\phi}(\bm{x}),\,
          g\!\otimes\!f
       \rangle_{\scriptscriptstyle\mathrm{HS}}
     \bigr]
\end{equation}
\textbf{Boundedness of the expectation functional.}
Consider the following linear functional:
\begin{equation}
\label{eq:linear_func}
  \mathscr{F}:\mathrm{HS}(\mathcal{F},\mathcal{G})\!\to\!\mathbb{R},
  \qquad
  \mathscr{F}(\mathscr{L})
     =\mathbb{E}\bigl[
        \langle
          \bar{\psi}(\bm{y})\!\otimes\!\bar{\phi}(\bm{x}),\,
          \mathscr{L}
        \rangle_{\scriptscriptstyle\mathrm{HS}}
      \bigr]
\end{equation}
Applying Jensen and then Cauchy–Schwarz inequalities:
\begin{equation}
\begin{aligned}
    \bigl|\mathscr{F}(\mathscr{L})\bigr|
    &\le\mathbb{E}\bigl[ \left|\langle \bar\psi(\bm{y}) \otimes \bar\phi(\bm{x}) , \mathscr{L} \rangle_{\scriptscriptstyle\textrm{HS}}\right| \bigr] \\[4pt]
    &\le
    \mathbb{E}\bigl[
      \|\bar{\psi}(\bm{y})\!\otimes\!\bar{\phi}(\bm{x})\|
          _{\scriptscriptstyle\mathrm{HS}}\;
      \|\mathscr{L}\|_{\scriptscriptstyle\mathrm{HS}}
    \bigr]  \\[4pt]
    &\le
    \|\mathscr{L}\|_{\scriptscriptstyle\mathrm{HS}}\,
    \mathbb{E}\bigl[
      \|\bar{\phi}(\bm{x})\|_{\mathcal{F}}\,
      \|\bar{\psi}(\bm{y})\|_{\mathcal{G}}
    \bigr]  \\[4pt]
    &\le
    \|\mathscr{L}\|_{\scriptscriptstyle\mathrm{HS}}\,
    \mathbb{E}\bigl[
      \sqrt{k(\bm{x},\bm{x})\,l(\bm{y},\bm{y})}
    \bigr] < \infty
\end{aligned}
\end{equation}
so $\mathscr{F}$ is a bounded linear functional on
$\mathrm{HS}(\mathcal{F},\mathcal{G})$.\footnote{We assume $k(\bm x,\bm x)$ and $l(\bm y,\bm y)$ have finite first moments.}
\par\noindent\textbf{Riesz representation and the covariance operator.}
Recall that on any Hilbert space, the Riesz representation theorem states that
every bounded linear functional can be expressed as an inner product with a
unique element of that space.  Applying this to the bounded functional
$\mathscr{F}$ defined above, we obtain a unique operator
$\mathbb{C}\mathrm{ov}_{yx}\in\mathrm{HS}(\mathcal{F},\mathcal{G})$ satisfying:
\begin{equation}
  \bigl\langle \mathbb{C}\mathrm{ov}_{yx},\,\mathscr{L}\bigr\rangle_{\scriptscriptstyle\mathrm{HS}}
  \;=\;
  \mathscr{F}(\mathscr{L})
  \qquad
  \forall\; \mathscr{L}\in\mathrm{HS}(\mathcal{F},\mathcal{G})
\end{equation}
Taking $\mathscr{L}=g\!\otimes\!f$ recovers the cross-covariance identity
used in the main text. We now turn to the empirical estimation of cross-covariance operator.
\subsection{Empirical Estimation of the Objective \label{sec:appendix:math_proofs:emp_estim}}
Given a dataset $\mathcal{D} = \{(\bm{x}_i, \bm{y}_i)\}_{i=1}^n$, the empirical estimate of the cross-covariance operator is given by:
\begin{equation}
    \widehat{\mathbb{C}\mathrm{ov}}_{yx} = \frac{1}{n} \sum_{i=1}^n \bar{\psi}(\bm{y}_i) \otimes \bar{\phi}(\bm{x}_i).
\end{equation}
Next, by examining the action of the tensor product operator in Equation~\eqref{eq:tensor_prod}, together with the unit-norm constraints on $f$ and $g$, we conclude\footnote{This follows directly from the Representer Theorem.}:
\begin{equation}
    \label{eq:representer}
    f = \sum_{i=1}^n \alpha_i \bar{\phi}(\bm{x}_i), \quad \text{and} \quad g = \sum_{i=1}^n \beta_i \bar{\psi}(\bm{y}_i)
\end{equation}
Substituting this representation into the objective in Equation~\eqref{eq:pop_objective}, we obtain:
\begin{equation}
    \mathbb{E}\left[ \bar{f}(\bm{x}) \bar{g}(\bm{x}) \right] \approx \frac{1}{n} \sum_{i=1}^n \sum_{j=1}^n \sum_{k=1}^n \alpha_j \beta_k \langle \bar{\psi}(\bm{y}_i), \bar{\psi}(\bm{y}_k) \rangle_{\mathcal{G}} \cdot \langle \bar{\phi}(\bm{x}_i), \bar{\phi}(\bm{x}_j) \rangle_{\mathcal{F}}
\end{equation}
Letting $k$ and $l$ denote the kernels associated with $\mathcal{F}$ and $\mathcal{G}$, respectively, the expression simplifies to:
\begin{equation}
    \mathbb{E}\left[ \bar{f}(\bm{x}) \bar{g}(\bm{x}) \right] \approx \frac{1}{n} \sum_{i=1}^n \sum_{j=1}^n \sum_{k=1}^n \beta_k \, \bar{l}_{ik} \, \bar{k}_{ij} \, \alpha_j = \frac{1}{n}\bm{\beta}^\top \bar{\bm{L}} \bar{\bm{K}} \bm{\alpha}
\end{equation}
where $\bar{\bm{K}}$ and $\bar{\bm{L}}$ are the centered kernel matrices corresponding to $k$ and $l$, respectively. Following the representation of $f$ and $g$ from Equation~\eqref{eq:representer}, the unit-norm constraint on $f$ can be expressed as:
\begin{equation}
    \begin{aligned}
        \|f\|_{\scriptscriptstyle \mathcal{F}}^{\scriptscriptstyle 2} &= \left\langle \sum_{i=1}^n \alpha_i \bar{\phi}(\bm{x}_i), \sum_{j=1}^n \alpha_j \bar{\phi}(\bm{x}_j) \right\rangle_{\mathcal{F}} \\
        &= \sum_{i=1}^n \sum_{j=1}^n \alpha_i \alpha_j \left\langle \bar{\phi}(\bm{x}_i), \bar{\phi}(\bm{x}_j) \right\rangle_{\mathcal{F}} \\
        &= \bm{\alpha}^\top \bar{\bm{K}} \bm{\alpha}
    \end{aligned}
\end{equation}
and likewise for $g$:
\begin{equation}
    \|g\|_{\scriptscriptstyle \mathcal{G}}^{\scriptscriptstyle 2} = \bm{\beta}^\top \bar{\bm{L}} \bm{\beta}
\end{equation}
Therefore, the empirical estimation of the objective reduces to the following constrained optimization problem:
\begin{equation}
    \sup_{\bm{\alpha}} \; \sup_{\bm{\beta}} \quad \frac{1}{n}\bm{\beta}^\top \bar{\bm{L}} \bar{\bm{K}} \bm{\alpha} \qquad \text{s.t.} \qquad \bm{\alpha}^\top \bar{\bm{K}} \bm{\alpha} = \bm{\beta}^\top \bar{\bm{L}} \bm{\beta} = 1
\end{equation}
Next, let $\bar{\bm{K}} = \bm{J}\bm{J}^\top$ and define $\bm{u} = \bm{J}^\top \bm{\alpha}$, and similarly let $\bar{\bm{L}} = \bm{D}\bm{D}^\top$ with $\bm{v} = \bm{D}^\top \bm{\beta}$. Then, the objective can be rewritten as:
\begin{equation}
    \sup_{\bm{u}} \; \sup_{\bm{v}} \quad \frac{1}{n} \bm{v}^\top \bm{D}^\top \bm{J} \bm{u}
    \qquad \text{s.t.} \qquad \bm{v}^\top \bm{v} = \bm{u}^\top \bm{u} = 1
\end{equation}
This is maximized by the largest singular value of $\bm{D}^\top \bm{J}$, i.e., it is upper bounded by $\sigma_{\max}(\bm{D}^\top \bm{J})$. Next, by considering the sum of squared singular values, we obtain:
\begin{equation}
    \sum_{i=1}^n \sigma_i^2 = \frac{1}{n^2} \mathrm{tr}\left( \bm{J}^\top \bm{D} \bm{D}^\top \bm{J} \right)
    = \frac{1}{n^2} \mathrm{tr}\left( \bm{J} \bm{J}^\top \bm{D} \bm{D}^\top \right)
    = \frac{1}{n^2} \mathrm{tr}(\bar{\bm{K}} \bar{\bm{L}}) = \frac{1}{n^2} \mathrm{tr}(\bm{KHLH})
\end{equation}
where we used the cyclic property of the trace and the centering matrix $\bm{H} = \bm{I} - \frac{1}{n} \bm{1} \bm{1}^\top$.
\par\noindent Finally, the squared \emph{Hilbert–Schmidt} norm of the empirical cross-covariance
operator reproduces the same scalar quantity:
\begin{equation}
\label{eq:hs-norm}
\begin{aligned}
  \bigl\|\widehat{\mathbb{C}\mathrm{ov}}_{yx}\bigr\|_{\scriptscriptstyle\mathrm{HS}}^{2}
  &= \Bigl\langle
       \frac1n\sum_{i=1}^{n}\,\bar{\psi}(\bm{y}_{i})\!\otimes\!\bar{\phi}(\bm{x}_{i}),
       \;
       \frac1n\sum_{j=1}^{n}\,\bar{\psi}(\bm{y}_{j})\!\otimes\!\bar{\phi}(\bm{x}_{j})
     \Bigr\rangle_{\scriptscriptstyle\mathrm{HS}} \\[4pt]
  &= \frac1{n^{2}}\sum_{i=1}^{n}\sum_{j=1}^{n}
       \bigl\langle
         \bar{\psi}(\bm{y}_{j}),
         \,(\bar{\psi}(\bm{y}_{i})\!\otimes\!\bar{\phi}(\bm{x}_{i}))\,
         \bar{\phi}(\bm{x}_{j})
       \bigr\rangle_{\mathcal{G}} \\[4pt]
       &= \frac1{n^{2}}\sum_{i=1}^{n}\sum_{j=1}^{n} 
         \langle\bar{\phi}(\bm{x}_{i}), \bar{\phi}(\bm{x}_{j})\rangle_{\mathcal{F}}
         \;\langle
         \bar{\psi}(\bm{y}_{j}), \bar{\psi}(\bm{y}_{i})\rangle_{\mathcal{G}}
         \\[4pt]
  &= \frac1{n^{2}}\sum_{i=1}^{n}\sum_{j=1}^{n} 
       \bar{k}_{ij}\,\bar{l}_{ji} 
  = \frac1{n^{2}}\operatorname{tr}\bigl(\bar{\bm{K}}\bar{\bm{L}}\bigr)
     \;=\;
     \frac1{n^{2}}\operatorname{tr}\bigl(\bm{KHLH}\bigr)
\end{aligned}
\end{equation}
In the next section we show that, in the empirical setting, one can work
with a \emph{finite-dimensional} feature map obtained from a factorization
of the kernel matrix.  This representation makes subsequent covariance
expressions far more transparent.
\subsection{Finite-Dimensional Feature Map \label{sec:appendix:math_proofs:finit_dim_map}}
Let $\mathcal{F}\!:\bm{x}\mapsto\phi(\bm{x})$ be an RKHS with reproducing
kernel $k(\,\cdot\,,\bm{x})$.  By the Representer Theorem
(cf.\ Eq.\,\eqref{eq:representer}), any solution subject to norm
constraints admits the form:
\begin{equation}
   f=\sum_{i=1}^{n}\alpha_{i}\,\phi(\bm{x}_{i})
\end{equation}
so $f$ lies in the finite subspace
$\mathcal X:=\mathrm{span}\{\phi(\bm{x}_{i})\}_{i=1}^{n}$.
For $f,g\in\mathcal X$ we have:
\begin{equation}
   \langle f,g\rangle_{\mathcal F}
      =\Bigl\langle\sum_{i=1}^{n}\alpha_{i}\phi(\bm{x}_{i}),
                  \sum_{j=1}^{n}\beta_{j}\phi(\bm{x}_{j})\Bigr\rangle_{\mathcal F}
      =\bm\alpha^{\!\top}\bm K\bm\beta,
\end{equation}
where $\bm K$ is the kernel matrix $K_{ij}=k(\bm{x}_{i},\bm{x}_{j})$. Because $\bm K$ is symmetric positive semi-definite it admits a factorization:
\begin{equation}
   \bm K=\bm J\bm J^{\!\top}
\end{equation}
with $\bm J\in\mathbb R^{n\times r}$ full rank. Row~$i$ of $\bm J$ (denoted by $\bm J^\top_i$) therefore provides an \emph{implicit finite-dimensional
feature vector} for $\phi(\bm{x}_{i})$: indeed
$\bm J_i^\top\bm J_j
      =K_{ij}
      =\langle\phi(\bm{x}_{i}),\phi(\bm{x}_{j})\rangle_{\mathcal F}$
Moreover:
\begin{equation}
   \langle f,g\rangle_{\mathcal F}
      =\bm\alpha^{\!\top}\bm J\bm J^{\!\top}\bm\beta
      =( \bm J^{\!\top}\bm\alpha )^{\!\top}\!
       ( \bm J^{\!\top}\bm\beta )
      =\bm u^{\!\top}\bm v
\end{equation}
where we have set $\bm u=\bm J^{\!\top}\bm\alpha$ and
$\bm v=\bm J^{\!\top}\bm\beta$.
Hence $\bm u,\bm v\in\mathbb R^{r}$ are equivalence of $f$ and $g$ in this finite-dimensional feature map. Using the reproducing property, the centred evaluation vector
$\bar{\bm f}=[\,f(\bm{x}_{1})-\mu_{f},\dots,
               f(\bm{x}_{n})-\mu_{f}\,]^{\!\top}$, with
$\mu_{f}=\tfrac1n\sum_{i=1}^{n}f(\bm{x}_{i})$, satisfies :
\begin{equation}
\label{eq:finite-mean-vector}
   \bar{\bm f}
      =\bm J\bm u-\bm 1
        \Bigl(\tfrac1n\bm 1^{\!\top}\bm J\bm u\Bigr)
      =\bigl(\bm I-\tfrac1n\bm {11}^{\!\top}\bigr)\bm J\bm u
      =\bm H\bm J\bm u
\end{equation}
where $\bm H$ is the centring matrix. Now let
$\mathcal{G}\!:\bm{y}\mapsto\psi(\bm{y})$
with Gram matrix $\bm L=\bm D\bm D^{\!\top}$.
Repeating the same construction yields
$\bar{\bm g}=\bm H\bm D\bm v$ for
$g(\bm{y})=\sum_{j=1}^{n}\beta_{j}\psi(\bm{y}_{j})$.
Thus the empirical objective becomes:
\begin{equation}
\begin{aligned}
  \sup_{g\in\mathcal G}\;\sup_{f\in\mathcal F}\;
  \mathbb E\bigl[\bar f(\bm x)\,\bar g(\bm y)\bigr]
  &\;\approx\;
    \frac1n\sum_{i=1}^{n}
      \bigl(f(\bm{x}_{i})-\mu_{f}\bigr)\,
      \bigl(g(\bm{y}_{i})-\mu_{g}\bigr) \\[4pt]
  &\;=\;
    \frac1n(\bm H\bm D\bm v)^{\!\top}(\bm H\bm J\bm u) \\[4pt]
  &\;=\;
    \frac1n\,\bm v^{\!\top}\bm D^{\!\top}\bm H\bm J\bm u
    \;=\;
    \bm v^{\!\top}\,\widehat{\bm C}_{yx}\,\bm u
\end{aligned}
\end{equation}
where
$\widehat{\bm C}_{yx}=\frac1n\bm D^{\!\top}\bm H\bm J$
is the (finite-sample) cross-covariance matrix corresponding to the
empirical cross-covariance operator
$\widehat{\mathbb C\mathrm{ov}}_{yx}$.
In the following section we derive the \Cref{eq:evp_smooth_confined}. 
\subsection{RKHS Encoders for Feature-Space Alignment \label{sec:appendix:math_proofs:finit_dim_map:rkhs_enc}}
Recall from Section~\ref{sec:obliviator:encoder} the following random  ($Z^i=\varepsilon^i(\bm{\theta}^i;X^i)$:
\begin{itemize}
    \item $X^i$: encoder input at iteration $i$
    \item $Z^i$: encoder output at iteration $i$
    \item $X$: initial representation
    \item $S$: undesired concept labels
    \item $Y$: target task labels
\end{itemize}
Let $\mathcal{F}$ be an RKHS in which we seek encoders $f$ that optimize the following objective:
\begin{equation}
\label{eq:repeat_rkhs}
\begin{gathered}
\sup_{\{g_{\scriptscriptstyle\mathcal{I}}\}} \;\; \sup_{f} \quad 
 \mathbb{E}[\bar{g}_{x^i}(X^i)\bar{f}(Z^i)]^2 
+\tau_x\,\mathbb{E}[\bar{g}_x(X)\bar{f}(Z^i)]^2 
+ \tau_y\,\mathbb{E}[\bar{g}_y(Y)\bar{f}(Z^i)]^2 \\
\text{s.t.} \quad \sup_{g_s} \; \mathbb{E}[\bar{g}_s(S)\bar{f}(Z^i)] = 0, \quad
\|g_{\scriptscriptstyle\mathcal{I}}\|_{\scriptscriptstyle\mathcal{G}_{\mathcal{I}}} 
= \|f\|_{\scriptscriptstyle\mathcal{F}} = \|g_s\|_{\scriptscriptstyle\mathcal{G}_{s}}= 1 
\end{gathered}
\end{equation}
Here, $\mathcal{G}_{\mathcal{I}}$ (with $\mathcal{I} = \{x^i, x, y\}$) and $\mathcal{G}_s$ refer to corresponding RKHSs for $g_{\mathcal{I}}$ and  $ g_s$.
Let the kernel matrices for $Z^i$, $X^i$, $X$, $Y$, and $S$ be factorized as 
$$
\bm{K}_{z^i} = \bm{L}_{z^i} \bm{L}_{z^i}^\top, \quad 
\bm{K}_{x^i} = \bm{J}_{x^i} \bm{J}_{x^i}^\top, \quad 
\bm{K}_x = \bm{J}_x \bm{J}_x^\top, \quad 
\bm{K}_y = \bm{L}_y \bm{L}_y^\top, \quad 
\bm{K}_s = \bm{L}_s \bm{L}_s^\top
$$
Using the finite-dimensional feature representation discussed earlier, the empirical estimate for each term in the objective, for instance $\mathbb{E}[\bar{g}_{y}(Y)\bar{f}(Z^i)]$, can be written as:
\begin{equation}
  \mathbb{E}[\bar{g}_{y}(Y)\bar{f}(Z^i)] \approx \frac{1}{n} \sum_{j=1}^n \bar{g}_{y}(\bm{y}_j)\bar{f}(\bm{z}_j^i) = \frac{1}{n}\bm{u}_y^\top\bm{L}_y^\top\bm{H}\bm{L}_{z^i}\bm{w} = \bm{u}_y^\top \widehat{\bm{C}}_{yz^i}\bm{w}
\end{equation}
where $\bm{u}_y$ and $\bm{w}$ are the equivalent vectors to $g_y$ and $f$ in the corresponding finite dimensional feature map, and $\widehat{\bm{C}}_{yz^i}$ denotes the empirical cross-covariance matrix.
Note that the following two optimization problems over $\bm{w}$ are equivalent:
\begin{equation}
\label{eq:rayleigh}
\sup_{\bm{u}_y} \;\; \sup_{\bm{w}} \quad (\bm{u}_y^\top \widehat{\bm{C}}_{yz^i} \bm{w})^2 
\quad \equiv \quad 
\sup_{\bm{w}} \quad \|\widehat{\bm{C}}_{yz^i} \bm{w}\|_{_2}^{^2}=\bm{w}^\top\widehat{\bm{C}}_{yz^i}^\top\widehat{\bm{C}}_{yz^i} \bm{w}
\end{equation}
This equivalence follows since the optimal $\bm{u}_y$ is aligned with $\widehat{\bm{C}}_{yz^i} \bm{w}$ and satisfies:
\begin{equation}
\bm{u}_y = \frac{\widehat{\bm{C}}_{yz^i} \bm{w}}{\|\widehat{\bm{C}}_{yz^i} \bm{w}\|_2}
\end{equation}
Hence, the full empirical objective becomes:
\begin{equation}
\begin{aligned}
\sup_{\bm{w}} \;\; \bm{w}^\top \big(&
\widehat{\bm{C}}^\top_{x^i z^i}\widehat{\bm{C}}_{x^i z^i}
+ \tau_x\,\widehat{\bm{C}}^\top_{x z^i}\widehat{\bm{C}}_{x z^i}
+ \tau_y\,\widehat{\bm{C}}^\top_{y z^i}\widehat{\bm{C}}_{y z^i}
\big) \bm{w}
\end{aligned}
\end{equation}
Now consider the constraint in Eq.~\eqref{eq:repeat_rkhs}, which ensures that $f$ does not increase alignment with the undesired attribute $S$. Its empirical estimate becomes:
\begin{equation}
\sup_{g_s} \; \mathbb{E}[\bar{g}_s(S)\bar{f}(Z^i)] \approx \sup_{\bm{u}_s} \; \bm{u}_s^\top \widehat{\bm{C}}_{sz^i}\bm{w} = 0
\end{equation}
To satisfy this constraint, we require $$\bm{w} \in \mathrm{Null}(\widehat{\bm{C}}_{sz^i})$$ Let $\bm{Q}$ be an orthonormal basis for this null space. Then we can write:
\begin{equation}
\bm{w} = \bm{Qv}
\end{equation}
Since orthonormal transformations preserve norm, the constraint remains unchanged. Substituting into the objective yields:
\begin{equation}
\label{eq:final}
\begin{gathered}
\sup_{\bm{v}} \;\; \bm{v}^\top \bm{Q}^\top \left(
\widehat{\bm{C}}^\top_{x^i z^i}\widehat{\bm{C}}_{x^i z^i}
+ \tau_x\,\widehat{\bm{C}}^\top_{x z^i}\widehat{\bm{C}}_{x z^i}
+ \tau_y\,\widehat{\bm{C}}^\top_{y z^i}\widehat{\bm{C}}_{y z^i}
\right) \bm{Q} \bm{v} \\
\text{s.t.} \quad \|\bm{v}\|_2 = 1
\end{gathered}
\end{equation}
This is a Rayleigh quotient maximization problem. The optimal solution $\bm{v}$ corresponds to the eigenvector of the matrix with the largest eigenvalue. Define:
\begin{equation}
\bm{A} = \bm{Q}^\top \left(
\widehat{\bm{C}}^\top_{x^i z^i}\widehat{\bm{C}}_{x^i z^i}
+ \tau_x\,\widehat{\bm{C}}^\top_{x z^i}\widehat{\bm{C}}_{x z^i}
+ \tau_y\,\widehat{\bm{C}}^\top_{y z^i}\widehat{\bm{C}}_{y z^i}
\right) \bm{Q}
\end{equation}
The matrix $\bm{A}$ is a sum of symmetric positive semi-definite matrices, and is itself symmetric and PSD. Using its eigen-decomposition:
\begin{equation}
\begin{aligned}
&\bm{A} = \bm{D} \bm{\Lambda} \bm{D}^\top \\
&\bm{v}^\top \bm{A} \bm{v} = \bm{v}^\top \bm{D} \bm{\Lambda} \bm{D}^\top \bm{v} = \bm{a}^\top \bm{\Lambda} \bm{a} = \sum a_i^2 \lambda_i \leq \lambda_{\max}
\end{aligned}
\end{equation}
where $\bm{a} = \bm{D}^\top \bm{v}$ and $\|\bm{a}\|_2^2 = 1$, since the eigenvectors of symmetric matrices form an orthonormal basis.
Subsequent encoders, constrained to be orthogonal to the previously selected ones, can be obtained by iteratively maximizing the same Rayleigh quotient over orthogonal complements. By the \emph{Courant–Fischer} min–max principle, these correspond to the eigenvectors associated with the next largest eigenvalues of $\bm{A}$. One may then select the top $d$ eigenvectors—ranked by eigenvalue magnitude or a normalized criterion—as encoder directions for the next iteration.

\section{Implementation Details\label{sec:app:implement}}
\subsection{\methodName{}'s Training Procedure \label{sec:app:oblv-proc}}
The practical implementation of \methodName{} is outlined in \Cref{alg:obliviator} and the open-source code is available at \url{https://github.com/ramin-akbari/Obliviator}. Depending on the availability of target task labels, the erasure setting is configured as either supervised (when target labels are available) or unsupervised. In the supervised case, an additional term involving $\tau_y \bm{K}y$ is included in the encoder loss \eqref{eq:loss_emp_iterative}, and correspondingly in the RKHS-based eigenvalue problem \eqref{eq:evp_smooth_confined} as $\tau_y\widehat{\bm{C}}_{yz^i}^\top\widehat{\bm{C}}_{yz^i}$. This term is omitted under the unsupervised setting.
\par\noindent The encoder is then trained for a fixed number of epochs, which must be sufficient to enable effective transfer of information from the original representation. If the encoder is trained for too few epochs, its capacity to preserve relevant features may be limited. In such cases, one practical strategy is to pre-train the encoder for a few epochs without applying the undesired-concept removal term, and then perform erasure for a smaller number of epochs. However, excessive pretraining can also be detrimental: the encoder may learn overly complex representations that are no longer smooth enough to be effectively constrained by the smooth witness functions. This, in turn, may necessitate stronger (and potentially less smooth) witnesses, which can hinder erasure quality.
Nonetheless, since initial representations from language models are typically expressive, even random projections can retain most task-relevant information. \footnote{This is consistent with the Johnson–Lindenstrauss lemma.}Thus, the encoder can often preserve key features with relatively few training iterations. In our experiments, we found that 10–15 full-batch iterations were typically sufficient. 
\par\noindent After training the encoder, its output is passed to the eigenvalue problem defined in \eqref{eq:evp_smooth_confined}. Since the feature space induced by the kernel can be high-dimensional, even for a moderate number of training samples, explicit factorization of the kernel matrix is generally intractable. To address this, we employ approximation methods such as the Nyström method or Random Fourier Features (RFF) \cite{rahimi2007random} to obtain a finite-dimensional approximation of the feature map. Next, the resulting eigenvalues are normalized by the largest eigenvalue, and the top $m$ eigenvectors are selected based on a predefined threshold. These eigenvectors correspond to functions in the RKHS, which serve as new encoder. We apply these functions to the encoder’s output to generate a transformed representation. This transformed representation then becomes the input to the encoder in the subsequent iteration.
\begin{algorithm}[t]
\caption{\methodName{} Training Procedure}
\label{alg:obliviator}
\small
\begin{algorithmic}[1]
\State \textbf{Input:} data $\{x_i\}$, unwanted labels $\{s_i\}$, optional target labels $\{y_i\}$
\For{$j = 1$ to $M$}
    \If{$\{y_i\}$ is available}
        \State  $\mathrm{RVs} \gets (x^j, x, y)$ \Comment{Supervised}
    \Else
        \State  $\mathrm{RVs} \gets (x^j, x) $ \Comment{Unsupervised}
    \EndIf
    \State Train encoder $\varepsilon^j$ using loss \eqref{eq:loss_emp_iterative} $\gets \mathrm{RVs}$   \Comment{Imposing Independence}
    \State $z^j \gets \varepsilon^j(x^j)$
    \State Solve EVP \eqref{eq:evp_smooth_confined}, obtain encoder \Comment{RKHS Disentanglement}
    \State $x^{j+1} \gets$ encode $z^j$ via \eqref{eq:proj_smooth}
\EndFor
\end{algorithmic}
\end{algorithm}
\subsection{Datasets \label{appendix:datasets}}
We evaluate on three benchmark datasets commonly used for concept erasure: \biasinbios, \dialsent{} and \dialment.
\begin{itemize}[left=1pt]
    {\item \bf{\biasinbios~\cite{de2019bias}:}} This dataset consists of biographical texts, each annotated with a profession (the primary task) and a gender label (the sensitive/protected attribute). It includes 28 distinct professions and two gender categories, with 53.7\% of the data associated with male subjects and 46.3\% with female subjects. The most common profession in the dataset is "professor," accounting for 30\% of the total samples. 
    To ensure a fair comparison with FaRM, we used the same dataset split as FaRM \cite{chowdhury2022learning} for the fine-tuned BERT representations. For the frozen representations, we followed the dataset split used by \cite{ravfogel2020null}.
    {\item \bf{\dial~\cite{blodgett2016demographic}:}} This dataset consists of two subsets: \textbf{\dialsent} and \textbf{\dialment}.  
\begin{itemize}[left=1pt]
    \item \textbf{\dialsent} is labeled for sentiment analysis, with sentiment as the primary target variable (\textit{happy} 54.57\%, \textit{sad} 45.43\%). It also includes \textit{race} labels (\textit{African-American English} 36.93\%, \textit{Standard American English} 63.07\%).  
    \item \textbf{\dialment} is a binary classification dataset for detecting whether a tweet mentions another user (50\% conversational, 50\% non-conversational). The \textit{race} labels in this subset are equally distributed (50\%-50\%).  
\end{itemize}
\end{itemize}
\subsection{Experimental Setup \label{sec:exp_setup}}
We use a multilayer perceptron (MLP) as our encoder, consisting of a single hidden layer with 256 units and the SiLU activation function. Optimization is performed using the AdamW optimizer with default hyperparameters. We set the learning rate to $5 \times 10^{-4}$ and apply a weight decay of 0.001. For the \biasinbios{} dataset, the encoder is trained for 30 iterations in the first step and 25 iterations in subsequent steps.
Due to the infeasibility of computing the full kernel matrix on the entire dataset, exact kernel-based training is only possible via stochastic gradient descent with a reasonable batch size. However, we observe that with RFF and training on the full-batch consistently yields better performance. Consequently, we approximate the kernel using RFF throughout our experiments. The RFF dimensionality is set to 2500/6000 in the first iteration and reduced to 1500 in later steps, as the encoder’s input dimension is reduced after the first transformation. Sigma was chosen based on the median heuristic.
To ensure that HSIC is not artificially increased through isotropic scaling, we evaluated two normalization strategies: (1) feature-wise normalization to unit variance and (2) sample-wise normalization. The latter proves more effective in our setup. 
For the \dialment{} and \dialsent{} datasets all settings are the same except the initial training is set to 30 iterations. All reported trade-off curves are over three independent runs where we reported minimum performance (lowest trade-off curve). For the RKHS encoder, the Kernel matrix is approximated using RFF with a dimension of 1500. The hyper-parameters used for training can be found in \Cref{tb:hyper-param}. Training is conducted on a single NVIDIA RTX A6000 GPU. For the training of DeepSeek and LLaMa on \biasinbios{} dataset all the parameters are similar to \Cref{tb:hyper-param} except we set the threshold to $10^{-5}$.
\begin{table}[t]
    \caption{Hyperparameters used for the training of \methodName{} across different datasets.}
  \label{tb:hyper-param}
  \centering
  \resizebox{1\linewidth}{!}{%
  \begin{tabular}{llcccccc}
    \toprule
    \multicolumn{2}{c}{} & \multicolumn{3}{c}{\textbf{Encoder Parameters} Eq.\eqref{eq:loss_emp_iterative}} & \multicolumn{2}{c}{\textbf{EVP Parameters} Eq.\eqref{eq:evp_smooth_confined}} \\
    \cmidrule(lr){3-5} \cmidrule(lr){6-8}
    \textbf{Dataset} & \textbf{Erasure} & $\tau_{x^i}$ & $\tau_x$ & $\tau_y$ & $\tau_x$ & $\tau_y$ & Threshold \\
    \midrule
    \textbf{\biasinbios{}} & Supervised   & 0.05 & 0.02  & 4  & 0.2 & 3 & $10^{-4}$    \\
                           & Unsupervised & 0.1 & 0.05  & --   & 0.5 & --& $10^{-4}$ \\
    \textbf{\dialsent{}}   & Supervised   & 0.05 & 0.02  & 4  & 0.2 & 3 & $5\times10^{-5}$\\
                           & Unsupervised & 0.1 & 0.05  & --   & 0.2 & -- & $5\times10^{-5}$\\
    \textbf{\dialment{}}   & Supervised   & 0.05 & 0.02  & 4  & 0.2 & 3 & $5\times10^{-5}$\\
                           & Unsupervised & 0.1 & 0.05  & --   & 0.2 & -- & $5\times10^{-5}$\\
    \bottomrule
  \end{tabular}
  }
\end{table}

\subsection{Computational Complexity and Scalability of \methodName{} \label{sec:appendix:prob_net:comp_complx}}
The complexity of Obliviator is determined by its two steps per iteration:
\subsubsection*{Encoder}
The primary cost is calculating the empirical HSIC loss. There are two general approaches for this:
\begin{itemize} [leftmargin=25pt,rightmargin=5pt, topsep=2pt, itemsep=1pt]
    \item \textit{Direct Kernel Matrix Calculation}: This method's time and memory complexity of $O(N^2)$ (where $N$ is the batch size) becomes intractable for large batches. This forces a trade-off, restricting the method to small batch sizes which can lead to a less accurate empirical estimation of HSIC.
    \item \textit{RFF Approximation}: To enable large-batch training for a better HSIC estimation, we use a scalable alternative based on Random Fourier Features (RFF). This approach has a time complexity of $O(N d_{\text{rff}}^2)$ and a more favorable memory complexity of $O(Nd_{\text{rff}})$. The core computations involve large matrix multiplications, which are highly optimized for modern GPU architectures, making this method very efficient in practice.
\end{itemize}
\subsubsection*{Eigen Value Problem (EVP)}
This step involves solving an EVP. We again use RFF to make this step tractable. The complexity breaks down as follows:
\begin{itemize} [leftmargin=25pt,rightmargin=5pt, topsep=2pt, itemsep=1pt]
      \item Constructing the matrix for the EVP involves matrix multiplications with a complexity of $O(Nd_{\text{rff}}^2)$.
    \item The eigenvalue solver itself has a complexity of $O(d_{\text{rff}}^3)$.
\end{itemize}
In typical deep learning setup data size $N$ is larger than the RFF dimension $d_{\text{rff}}$ and the overall cost of this step is likely dominated by the matrix multiplications, making its complexity effectively $O(Nd_{\text{rff}}^2)$. 
\par\noindent The following tables show the theoretical time/memory complexity of our method, alongside its practical complexity measured on an NVIDIA RTX-A6000 GPU. We also report the total time taken to perform erasure across different datasets, language models, and erasure schemes.
\begin{table}[t!]
\centering
\caption{Comparison of theoretical and practical time complexity in each step (\biasinbios{})}
\resizebox{\textwidth}{!}{%
\begin{tabular}{lcc}
\toprule
\textbf{Time Complexity} & \textbf{Encoder} & \textbf{EVP} \\
\midrule
\textbf{Theoretical} & 
$\mathcal{O}(N d_{\text{embd}} \,d_{\text{hidden}} + N d_{\text{rff}}\, d_{\text{hidden}} + N d_{\text{rff}}^{2})$ &
$\mathcal{O}(d_{\text{rff}}^{3} + N d_{\text{rff}}^{2})$ \\[4pt]
\textbf{Average time per step (BERT)} &
$177.2\,\text{ms} \pm 0.38\,\text{ms}$ &
$172\,\text{ms} \pm 1.23\,\text{ms}$ \\[4pt]
\textbf{Average time per step (DeepSeek)} &
$219.6\,\text{ms} \pm 1.4680\,\text{ms}$ &
$165\,\text{ms} \pm 1.65\,\text{ms}$ \\[5pt]
\bottomrule
\end{tabular}
}
\label{tab:time_complexity}
\end{table}

\begin{table}[t!]
\centering
\caption{Comparison of theoretical and practical memory complexity in each step (\biasinbios{})}
\begin{tabular}{lcc}
\toprule
\textbf{Memory Complexity} & \textbf{Encoder} & \textbf{EVP} \\
\midrule
\textbf{Theoretical} &
$\mathcal{O}(d_{\text{embd}} \,d_{\text{hidden}} + d_{\text{rff}} \,d_{\text{hidden}} + N d_{\text{rff}} + d_{\text{rff}}^{2})$ &
$\mathcal{O}(d_{\text{rff}}^{2} + N d_{\text{rff}})$ \\[4pt]
\textbf{Utilized Memory} & 8.18 GB & 6.78 GB \\
\bottomrule
\end{tabular}%
\label{tab:memory_complexity_tr2}
\end{table}

\begin{table}[H]
\centering
\caption{Erasure time for \methodName{} \biasinbios{}-frozen}
\resizebox{\textwidth}{!}{%
\begin{tabular}{lccccc}
\toprule
\textbf{\biasinbios{} Frozen} & 99.4\%$\rightarrow$70\% (sec) & 70\%$\rightarrow$65\% (sec) & 65\%$\rightarrow$60\% (sec) & 60\%$\rightarrow$RC (sec) & 99.4\%$\rightarrow$RC (sec) \\
\midrule
\textbf{BERT} & & & & & \\
Unsupervised & 21.9$\pm$0.05 & 27.4$\pm$0.06 & 107.76$\pm$0.26 & 323.7$\pm$0.7 & 482.9$\pm$1.1 \\
Supervised & 27.1$\pm$0.06 & 72.26$\pm$0.1 & 153.5$\pm$0.3 & 487.7$\pm$0.5 & 749.7$\pm$1.7 \\
\midrule
\textbf{DeepSeek} & & & & & \\
Unsupervised & 74.3$\pm$0.5 & 67.53$\pm$0.46 & 94.54$\pm$0.64 & 189.08$\pm$1.28 & 425.4$\pm$2.9 \\
Supervised & 156$\pm$1 & 100.3$\pm$0.67 & 390.1$\pm$2.63 & 1070$\pm$7.2 & 1716.3$\pm$11.6 \\
\bottomrule
\end{tabular}%
}
\label{tab:tr3_bias_frozen}
\end{table}

\begin{table}[H]
\centering
\caption{Erasure time \methodName{} on \biasinbios{}-Finetuned}
\resizebox{\textwidth}{!}{%
\begin{tabular}{lccccc}
\toprule
\textbf{BERT} & 99.4\%$\rightarrow$70\% (sec) & 70\%$\rightarrow$65\% (sec) & 65\%$\rightarrow$63\% (sec) & 63\%$\rightarrow$RC (sec) & 99.4\%$\rightarrow$RC (sec) \\
\midrule
Unsupervised & 37.7$\pm$0.08 & 62.8$\pm$0.14 & 540.7$\pm$1.2 & 4225.6$\pm$9.35 & 4363.9$\pm$9.6 \\
Supervised & 43.2$\pm$0.1 & 162.06$\pm$0.36 & 108.04$\pm$0.24 & 5596$\pm$12.4 & 5900$\pm$12.2 \\
\bottomrule
\end{tabular}%
}
\label{tab:tr5_bias_finetuned}
\end{table}

\begin{table}[H]
\centering
\caption{Erasure time for \methodName{} on \dialment{}-Frozen}
\begin{tabular}{lccccc}
\toprule
\textbf{BERT} & 80\%$\rightarrow$70\% (sec) & 70\%$\rightarrow$60\% (sec) & 60\%$\rightarrow$RC (sec) & 80\%$\rightarrow$RC (sec) \\
\midrule
Unsupervised & 55.2$\pm$0.1 & 193.3$\pm$0.5 & 1293.2$\pm$3.0 & 1541.67$\pm$3.6 \\
Supervised & 126.4$\pm$0.3 & 144.5$\pm$0.3 & 1372.9$\pm$3.7 & 1643.8$\pm$3.8 \\
\bottomrule
\end{tabular}%
\label{tab:tr4_dial_frozen}
\end{table}


\begin{table}[h!]
\centering
\caption{Erasure time for FARM and KRAM on \biasinbios{}}
\resizebox{0.75\textwidth}{!}{%
\begin{tabular}{llcccc}
\toprule
\textbf{Method} & \textbf{Batch Size} & \textbf{Memory (MB)} & \textbf{Time (sec/epoch)} & \textbf{Total Time (sec)} \\
\midrule
\multirow{2}{*}{\textbf{FaRM}} 
  & 1024 & 96.4   & 21.4$\pm$1.2   & 535$\pm$30 \\
  & 4096 & 305.44 & 13.22$\pm$1.03 & 330.5$\pm$25.75 \\
\midrule
\multirow{2}{*}{\textbf{KRaM}} 
  & 1024 & 76.8   & 65.58$\pm$1.4  & 3279$\pm$65.2 \\
  & 4096 & 479.04 & 45.81$\pm$8.5  & 2290$\pm$42.5 \\
\bottomrule
\end{tabular}%
}
\label{tab:comparison_with_baselines}
\end{table}
\section{Probing Networks\label{sec:appendix:prob_networks}}
\subsection{Our Choice of Probing Networks \label{sec:probing_ours}}
Unlike prior work \cite{chowdhury2021adversarial, chowdhury2022learning}, we do not restrict the nonlinear adversary to the default MLP classifier provided by scikit-learn. Instead, we evaluate all methods using two types of adversarial classifiers: an SVM with an RBF kernel (implemented via cuML \cite{raschka2020machine}) and an MLP with two hidden layers of 128 neurons (implemented in PyTorch \cite{NEURIPS2019_bdbca288}). The MLP classifier is trained three times with different random seeds, while the SVM is evaluated under seven hyperparameter settings $({\gamma}, C) \in \{(10,5), (10,10), (5,10), (5,5), (1,5), (1,1), (0.5,1)\}$. We report the maximum accuracy obtained across all ten runs (three MLP + seven SVM) for each method. For the target task classifier, we employ the same architecture as the adversarial MLP, a two-layer Multi-Layer Perceptron with 128 neurons per hidden layer.
An SVM with a large kernel scale ($\gamma = 10$) and hard margin ($C = 10$) serves as a coarse decision boundary, useful for testing whether erasure occurs by overlapping distributions across unwanted attributes. We must note that, the erasure process is carried by a deterministic function and only shuffling data samples across unwanted attributes is not sufficient for erasure as this shuffling can be still invertible. Therefore, true erasure happens upon distribution matching.
\par\noindent In our experiments, we observed that applying dimensionality reduction often improved adversarial accuracy. For instance, using a strong encoder—an MLP with four hidden layers(see $\S$\ref{sec:appendix:add_results:single_vs_multi})—without any iterative refinement led our MLP adversaries to perform at random chance level while SVM with $\gamma=10,C=10$ reported around 80\% accuracy. However, when we applied RKHS refinement via \eqref{eq:evp_smooth_confined} as a dimensionality reduction step and re-evaluated the MLP adversaries again, their accuracy was similar to that of SVM with $\gamma=10,C=10$. This indicates that sensitive information was still present in the representation, and the prior adversary failures were primarily due to training difficulty rather than successful erasure.

\subsection{Ablation Studies with Different Probing Networks \label{sec:appendix:prob_networks:analysis }}
To evaluate the robustness of our probing setup, we assess the trade-off curves using multiple types of classifiers. Specifically, we consider a Multi-Layer Perceptron (MLP) with five hidden layers of size 128, as well as Random Forest classifiers with 100 estimators and maximum depths of 15, 20, and 25. For the MLP, we report the highest accuracy across three independent runs. For the Random Forest, we report the best accuracy obtained across the three depth settings.
The resulting trade-off curves are shown in \Cref{fig:unsup:other_classifiers}, alongside the curve obtained using our default probing classifier (as described earlier). As illustrated in the figure, our choice of classifier yields a consistently lower trade-off profile, demonstrating its robustness for evaluating erasure performance.
\par\noindent In contrast to prior work, we do not rely on visualizations produced by t-SNE or UMAP to assess overlap with respect to the sensitive attribute (e.g., gender). While such techniques can be visually appealing, they are unreliable indicators of structure in high-dimensional spaces, often distorting distances and cluster separability. Instead, we adopt an evaluation based on the training accuracy of an expressive classifier.
In particular, if a sufficiently flexible probing model is unable to overfit the training data, this provides stronger evidence that the representation has been overlapped across the unwanted attribute, suggesting successful erasure. However, it is important to note that we are working with finitely many samples in a high dimensional space. It is unlikely that the data points become so close that no probing classifier can overfit to training samples. For this reason, the flexibility of the probing model must be chosen reasonably so that this evaluation becomes meaningful. \Cref{fig:unsup:ablation} presents the train and test accuracy of the MLP with five hidden layers over the course of erasure. As the process progresses, we observe a drop in both training and test accuracy, with the training accuracy falling to near random chance. This indicates that even a high-capacity classifier cannot overfit to the training samples, suggesting that the representation has been successfully aligned across the sensitive groups via distributional overlap.
\par\noindent Taken together, these probing results indicate that \methodName{} consistently achieves full concept erasure. This is consistent with the theoretical motivation: minimizing $\textrm{HSIC}(S, X)$ enforces statistical independence between the representation $X$ and the sensitive attribute $S$.

\begin{figure}[t!]
    \centering
    \subcaptionbox{Comparison of obtained trade-off with deeper MLP (5 hidden layers) and Random Forest with our choice of classifier (MLPx2-SVM). \label{fig:unsup:other_classifiers}}{\scalebox{0.42}{\includegraphics[]{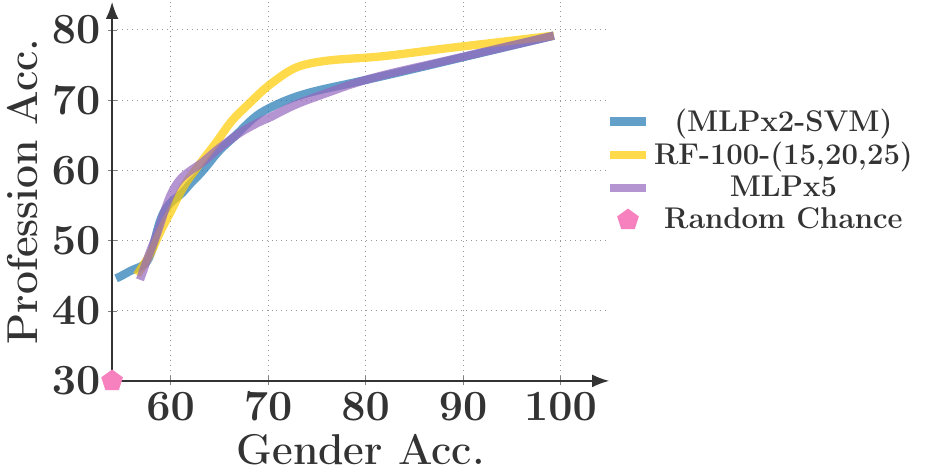}}}
    \hfill
    \subcaptionbox{Train and test accuracy for MLP with 5 hidden layer at different step of the erasure.  \label{fig:unsup:bios_unsup_train_test}}{\scalebox{0.42}{\includegraphics[]{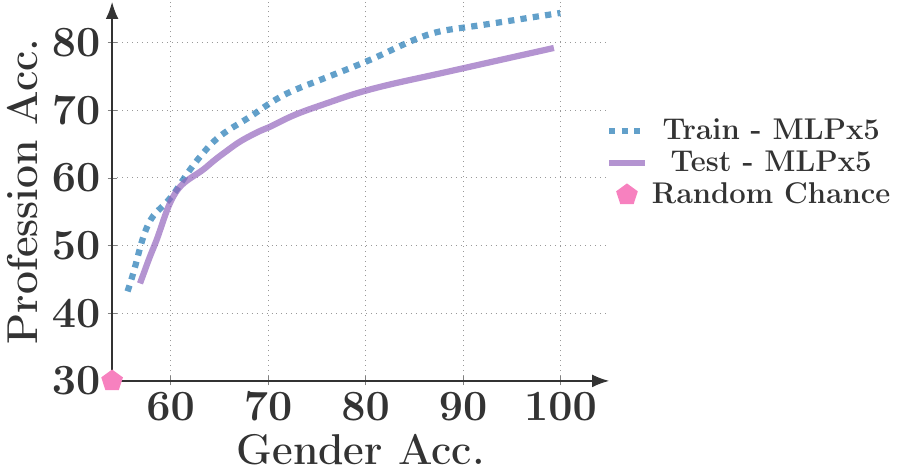}}}
    \caption{Ablation studies with different probing networks. Dataset is \biasinbios{} and language model is BERT.}
    \label{fig:unsup:ablation}
\end{figure}

\section{Additional Results\label{sec:appendix:add_results}}

\subsection{Single Step Erasure Vs Multi-Step Erasure \label{sec:appendix:add_results:single_vs_multi}}
To highlight the importance of each component in \methodName{}, we evaluate a simplified baseline using a single MLP encoder with either one or four hidden layers (each containing 256 neurons). The goal is to perform erasure using only the encoder trained with the loss in \eqref{eq:loss_emp_iterative}, without employing the multi-step framework or RKHS-based refinement. This setup allows us to isolate the contribution of these components and assess their effect on erasure and adversarial optimization. During training, we evaluate both the target task and undesired attribute accuracy every 30 steps.
As shown in \Cref{fig:unsup:ablation:encoder}, neither of the single-step encoder configurations achieves full concept erasure. Although both gender and profession accuracies initially decline, they eventually plateau and begin to oscillate, failing to reach full erasure. While a more expressive encoder might hypothetically improve performance, a direct comparison reveals that the empirical trade-off achieved by \methodName{} is consistently better. This demonstrates that the multi-step procedure and RKHS refinement, results in a more robust and effective erasure process.
\begin{figure}[t!]
    \centering
    \scalebox{0.6}{\includegraphics[]{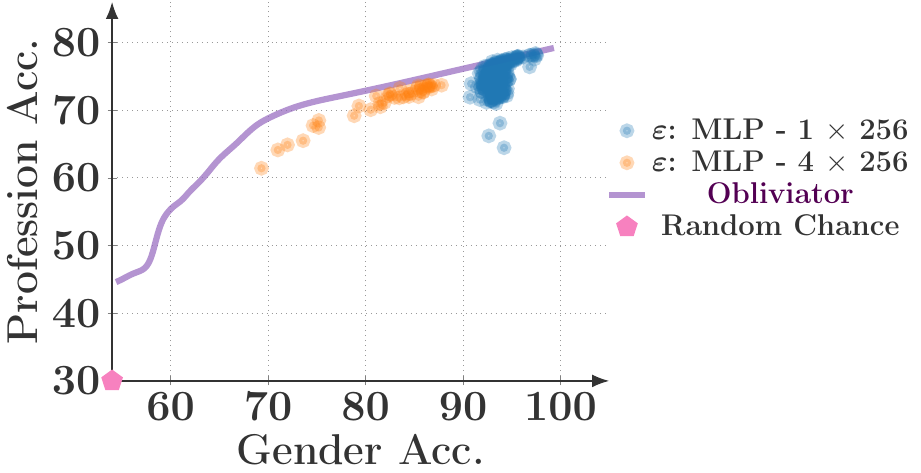}}
    \caption{Multi-Step Erasure vs. Single-Step Erasure. Comparison of erasure performance using a single-step encoder with either 1 or 4 hidden layers (MLP), versus the proposed multi-step erasure framework (\methodName{}). Results are shown on the \biasinbios{} dataset using BERT representations. The multi-step approach consistently demonstrates more effective erasure.}
    \label{fig:unsup:ablation:encoder}
\end{figure}

\subsection{Further Results on Fairness \label{sec:appendix:add_results:controlled_further}}
Here we show the additional fairness for controlled setup experiments \ref{sec:results:controlled_setup}. Moreover, we show the fairness for \biasinbios{} and \dialment{} before and after erasure.
\begin{table}[H]
\centering
\caption{Erasure with \methodName{} on DeepMoji representations in controlled setup. Split describe levels of unwanted attribute disproportion.}
\label{tab:controlled_exp}
\begin{tabular}{@{}cccccc@{}}
\toprule
\textbf{DeepMoji Rep.} & \textbf{Race Acc.} & \multicolumn{4}{c}{\textbf{Split}} \\
\cmidrule(lr){3-6} 
 & & \textbf{50\%} & \textbf{60\%} & \textbf{70\%} & \textbf{80\%} \\ \midrule
 \multirow{4}{*}{\textbf{Sentiment Acc($\uparrow$)}} & Original & 75.38 & 75.57 & 74.16 & 72.37 \\
 & 75\% & 75.28 & 74.68 & 73.89 & 70.44 \\
 & 65\% & 74.83 & 72.94 & 68.75 & 64.06 \\
 & 55\% & 74.53 & 69.62 & 62.82 & 57.82 \\ \midrule
\multirow{4}{*}{\textbf{DP($\downarrow$)}} & Original & 0.1078 & 0.2266 & 0.3109 & 0.3751 \\
 & 75\% & 0.0667 & 0.1028 & 0.2896 & 0.2293 \\
 & 65\% & 0.0280 & 0.0330 & 0.0848 & 0.0203 \\
 & 55\% & 0.0237 & 0.0600 & 0.0650 & 0.0671 \\ \midrule
\multirow{4}{*}{$\textrm{Gap}_{\textrm{RMS}}(\downarrow)$} & Original & 0.1360 & 0.2412 & 0.3212 & 0.3815 \\
 & 75\% & 0.1026 & 0.1324 & 0.2991 & 0.2350 \\
 & 65\% & 0.0920 & 0.0951 & 0.1251 & 0.0917 \\
 & 55\% & 0.0910 & 0.1050 & 0.0921 & 0.0915 \\ \bottomrule
\end{tabular}
\vspace{-10pt}
\end{table}

\begin{table}[H]
\centering
\caption{DP and $\textrm{Gap}_\textrm{rms}$ for \biasinbios{} Frozen Representations}
\label{tab:biasinbios_dp_gap}
\begin{tabular}{llcc}
\toprule
& \textbf{Method} & \textbf{BERT} & \textbf{DeepSeek} \\
\midrule
\multirow{2}{*}{\textbf{DP($\downarrow$)}} & Original   & 0.0204 & 0.0205 \\
                                 & \methodName{} & 0.0147 & 0.0120 \\
\midrule
\multirow{2}{*}{$\textrm{Gap}_{\textrm{rms}}(\downarrow)$} & Original   & 0.1573 & 0.1522 \\
                                          & \methodName{} & 0.0464 & 0.0662 \\
\bottomrule
\end{tabular}
\vspace{-5pt}
\end{table}

\begin{table}[H]
\centering
\caption{DP and $\textrm{Gap}_\textrm{rms}$ for \dialment{} Frozen Representations}
\label{tab:dialment_dp_gap}
\begin{tabular}{lcc}
\toprule
\textbf{Method} & \textbf{DP($\downarrow$)} & \textbf{$\textrm{Gap}_{\textrm{rms}}(\downarrow)$} \\
\midrule
Original   & 0.219  & 0.2109 \\
\methodName{} & 0.006  & 0.0419 \\
\bottomrule
\end{tabular}
\vspace{-5pt}
\end{table}

\subsection{Hyperparameter Sensitivity\label{sec:appendix:add_results:hyperparams}}
We conduct two additional studies to evaluate the impact of hyperparameters on the performance of \methodName{}. First, we examine the effect of the RBF kernel bandwidth parameters $\gamma_{z^i}$ and $\gamma_{x^i}$, which are defined as the inverse of the kernel width (i.e., $\gamma = 1/2\sigma^2$). These parameters control the smoothness and expressivity of the witness functions used in dependency estimation. Intuitively, higher $\gamma$ values (corresponding to narrower kernels) yield more complex witness functions, which are more sensitive to fine-grained correlations. This can result in more aggressive erasure by capturing spurious dependencies. This behavior is consistent with the patterns observed in \Cref{fig:unsup:bios_frozen_unsup_gammas}. Furthermore, we can see that \methodName{} performs consistently across a broad, reasonable range of values for $\gamma$, demonstrating robustness to this choice. Note that, in this experiment, we considered $\gamma=\gamma_{z^i} = \gamma_{x^i}$.

\par\noindent Next, we study the effect of the weighting coefficients $\tau_{x^i}$ and $\tau_x$, which appear in both the adversarial loss \eqref{eq:loss_emp_iterative} and the eigenvalue problem in \eqref{eq:evp_smooth_confined}. In this experiment, we vary only the encoder loss parameters while keeping the EVP parameters fixed, as specified in \Cref{tb:hyper-param}. As before, we set $\tau_{x^i} = \tau_x$. The results, shown in \Cref{fig:unsup:bios_frozen_unsup_taus}, indicate that very large values of $\tau$ lead to a slight degradation in performance. Moreover, increasing $\tau$ slows convergence, requiring more iterations to reach the same level of erasure.
\begin{figure}[t!]
    \centering
    \subcaptionbox{Trade-off obtained by different values of $\gamma=\gamma_{x^i}=\gamma_{z^i}$.  \label{fig:unsup:bios_frozen_unsup_gammas}}{\scalebox{0.45}{\includegraphics[]{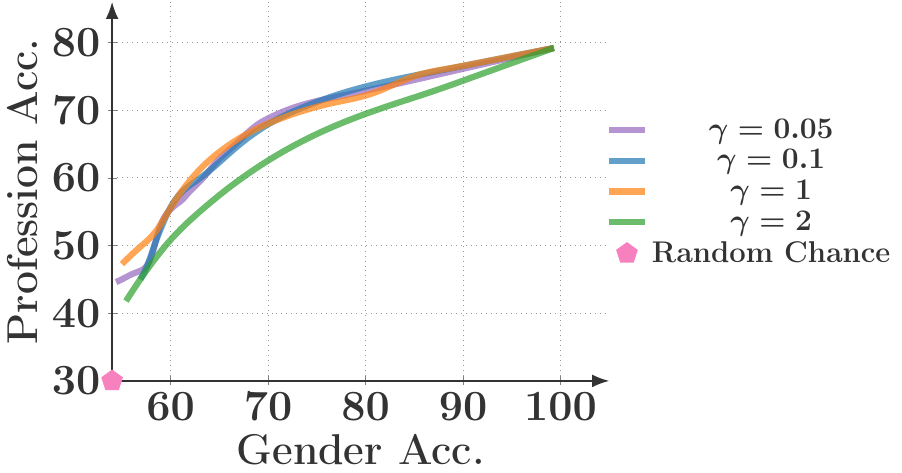}}}
    \hfill
    \subcaptionbox{Trade-off obtained by different values of $\tau = \tau_{x^i}=\tau_{z^i}$. Here we only change the encoder's parameter and $\tau$ in EVP problem are same as in \Cref{tb:hyper-param}. \label{fig:unsup:bios_frozen_unsup_taus}}{\scalebox{0.45}{\includegraphics[]{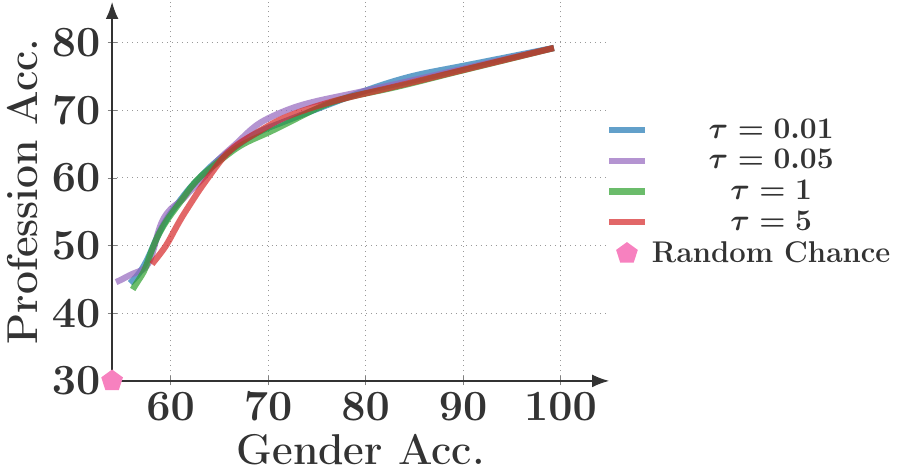}}}
    \caption{Analysis of the effect of hyperparameters on \methodName{}'s performance. Dataset is \biasinbios{} and language model is BERT.}
    \label{fig:unsup:ablation}
\end{figure}
\subsection{Visualization \label{sec:appendix:add_results:visual}}
\begin{figure}[t]
    \centering
    \subcaptionbox{Professions}{\scalebox{0.45}{\includegraphics[]{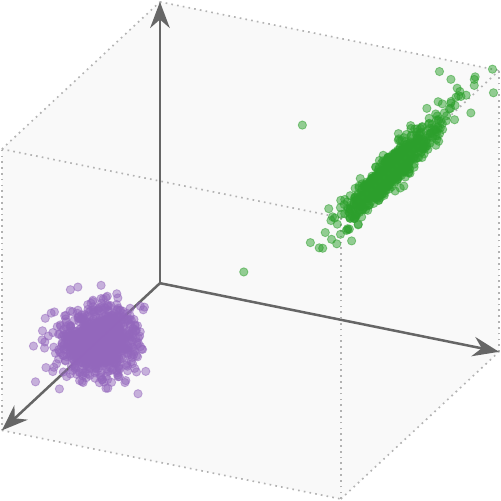}}}
    \subcaptionbox{Professor}{\scalebox{0.45}{\includegraphics[]{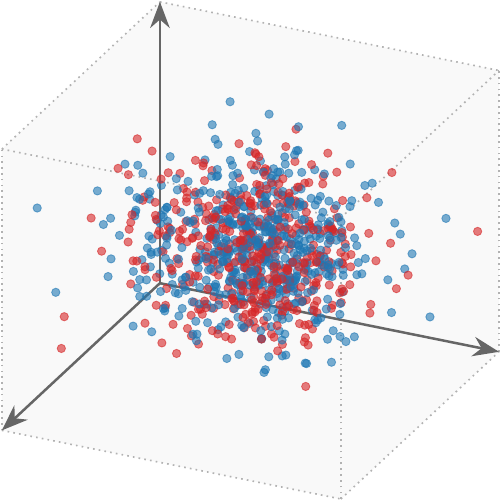}}}
    \subcaptionbox{Physician}{\scalebox{0.45}{\includegraphics[]{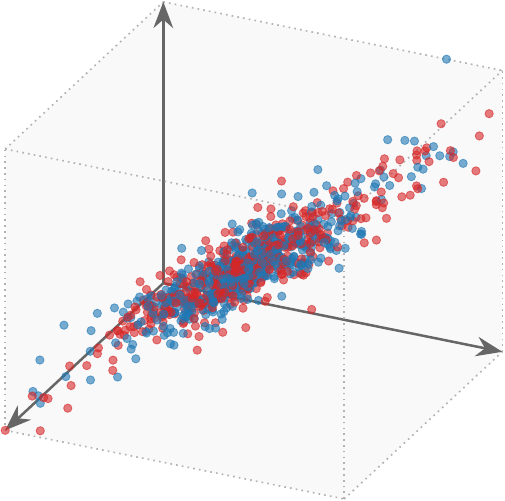}}}
    \caption{Representations Learned by \methodName{} on Finetuned \biasinbios{} Representations (BERT). The professions Professor and Physician are shown separately to better visualize the distribution of gender within each class. While the two professions are clearly separated (green and purple), gender labels (blue and red) are indistinguishable within each profession—indicating that \methodName{} effectively erases gender information while preserving task-relevant structure.}
    \label{fig:teaser_appendix}
\end{figure}
 Here, we visualize the representations of two professions—Professor and Physician—from the \biasinbios{} dataset using fine-tuned representations at an intermediate stage of erasure. This representation corresponds to a point on the trade-off curve where gender classification accuracy is approximately 60\%, while profession classification accuracy remains around 85\% (see \Cref{fig:unsup:bios}). The visualization in \Cref{fig:teaser_appendix} also corresponds to this stage of erasure.

At this point, the number of RKHS encoders obtained from \eqref{eq:evp_smooth_confined} is three, meaning we visualize the actual transformed representation used by the model. Notably, while the two professions remain well-separated, the gender labels within each profession are indistinguishable—highlighting that \methodName{} successfully removes gender information while preserving task-relevant structure. 




\section{Limitations\label{sec:appendix:limitations}}
\par\noindent 
\methodName{} requires precise definitions of unwanted attributes to effectively erase associated concepts. However, such definitions may be unavailable or vary across cultural and ethical contexts. Pseudo-labeling using existing AI models can partially mitigate this, but such labels may be noisy and impact model performance, a factor not investigated in this work. Similar limitations apply to supervised erasure, which assumes access to target task labels. 
\par\noindent Our method affords a more stable optimization by leveraging tractable components: HSIC to quantify statistical dependency and RKHS disentanglement to make task-relevant information more accessible during optimization. Our experiments confirmed the more utility-preserving erasure of \methodName{} compared to baselines across various setups. In some scenarios, it achieved full erasure without sacrificing utility, setting a strong benchmark for the erasure-utility trade-off. Nevertheless, our method solves the non-convex optimization in \eqref{sec:obliviator:encoder} numerically, and a closed-form, theoretical guarantee for optimal nonlinear erasure remains an open problem.

\section{Societal Impact \label{appendix:societal}}
\methodName{} enables targeted concept erasure from learned representations, which can be beneficial—for instance, by removing sensitive demographic attributes to reduce reliance on them in decision-making, or by erasing personal identifiers to protect privacy. However, if task-critical attributes are treated as “unwanted”, such as age in healthcare or income level in finance, model performance may degrade, or the model may rely on spurious correlations with remaining features like gender or race. Transparent documentation, domain-informed definitions, and independent oversight are essential, especially in high-stakes applications. Without careful review, erasure may discard socially meaningful information or fail to generalize across contexts.


\end{document}